\documentclass{article}

\usepackage[preprint]{neurips_2026}

\usepackage[utf8]{inputenc}
\usepackage[T1]{fontenc}
\usepackage{microtype}
\usepackage{graphicx}
\usepackage{booktabs}
\usepackage{comment}
\usepackage{hyperref}
\usepackage{url}
\usepackage{verbatim}
\usepackage{amsmath,amssymb,amsfonts,amsthm}
\usepackage{nicefrac}
\usepackage{bm}
\usepackage{mathtools}
\usepackage{enumitem}
\usepackage{algorithm}
\usepackage{algorithmic}
\usepackage{xcolor}

\newtheorem{theorem}{Theorem}[section]
\newtheorem{lemma}[theorem]{Lemma}

\title{A Physical Theory of Backpropagation: Exact Gradients from the Least-Action Principle}

\author{%
  Antonino Emanuele Scurria \\
  Quantum Information Laboratory (LIQ) CP224\\
  Universit\'e libre de Bruxelles (ULB)\\
  Av.\ F.\ D.\ Roosevelt 50, 1050 Bruxelles, Belgium \\
  \texttt{antonino.scurria@ulb.be} \\
}

\begin{document}

\maketitle

\begin{abstract}
Backpropagation is typically presented as a symbolic procedure: a backward pass topologically distinct from inference, with non-local error signals and synchronous global clocking, features with no clear analog in physical reality. Existing physics-inspired alternatives recover gradients only approximately, in vanishing-perturbation limits, or under weight-symmetry constraints incompatible with feedforward architectures.
In this paper, we address this gap by deriving exact backpropagation from Hamilton's least-action principle. By recasting the forward dynamics in continuous time and adapting a Lagrangian formalism for non-conservative systems to the resulting flow, we unify inference and gradient computation within a single variational framework on a doubled phase space, whose two conjugate fields jointly encode activations and sensitivities. A single global Lagrangian governs the dynamics: the task loss enters as a symmetry-breaking perturbation of the forward manifold, and credit assignment emerges as the tension that develops between the conjugate states. Inference and gradient computation thus unfold simultaneously through local interactions, requiring no separate backward circuit. Ultimately, standard backpropagation is recovered exactly as the discrete-time projection of this continuous flow.
This perspective unifies the formalism of physics with backpropagation, opening a principled pathway for applying tools from classical mechanics — symplectic geometry, Noether's theorem, path-integral methods — to the analysis of learning dynamics. As a downstream consequence, it also points toward analog and neuromorphic substrates in which learning is embodied in the hardware itself.
\end{abstract}

\section{Introduction}

For nearly four decades, backpropagation has been the engine of deep learning, yet it has resisted a physical derivation. As currently formulated, it is a symbolic procedure: a backward pass topologically distinct from inference, transmitting non-local error signals under synchronous global clocking \citep{crick1989recent}. None of these features has a clear analog in physical reality, and this gap has motivated a long line of attempts to recover gradient computation from physically plausible dynamics.

Each attempt has fallen short of a least-action derivation in a structurally identifiable way. Equilibrium Propagation \citep{SB17} and Contrastive Hebbian Learning \citep{movellan1991contrastive} cast learning as energy minimization on a scalar potential, but recover gradients only in a vanishing-nudge limit and require symmetric weights---a constraint fundamentally incompatible with feedforward Jacobians, which are non-reciprocal. Continuous adjoint methods \citep{chen2018neural} integrate dynamics backward in time, a procedure physically realizable only for time-reversible systems. Recurrent Backpropagation \citep{almeida1990learning, pineda1987generalization} introduces a separate auxiliary linear system engineered specifically to compute gradients. LeCun's Lagrangian formulation \citep{lecun1988theoretical} is, by his own explicit acknowledgment, not a Lagrangian of physics: it is a static algebraic stationarity condition with no associated dynamics, in which the adjoint variables are solved for rather than evolved. Across these approaches, a single tension recurs: physical realism and exact credit assignment have appeared mutually exclusive.

\textbf{In this work we address this tension.} We show that backpropagation is the equilibrium of a least-action flow on a doubled phase space, and that standard backpropagation is recovered exactly under the unit-step Euler discretization of this flow. Specializing the Bateman--Galley formalism for non-conservative classical mechanics \citep{bateman1931dissipative, galley2013classical, aykroyd2025hamiltonian} to the feedforward network flow, we construct a single global Lagrangian whose two conjugate fields jointly encode activations and sensitivities. The Euler--Lagrange equations of this Lagrangian, with the standard Rayleigh dissipation, generate a continuous flow whose unique stable equilibrium encodes the exact backpropagation gradient as the stress between the conjugate states. The task loss enters the action as a finite symmetry-breaking perturbation---no vanishing-nudge limit, no weight symmetry, no reverse-time integration, no engineered backward circuit. Inference and credit assignment unfold simultaneously through local interactions, as two facets of one variational dynamics.

The significance of this reformulation is structural rather than algorithmic. Backpropagation as currently understood is procedurally specified---a symbolic chain-rule computation that physical substrates can only approximate. Backpropagation as a least-action flow is a different kind of object: it is governed by the same variational principle as the rest of classical mechanics, and the full analytical machinery built around least-action principles---Noether's theorem, symplectic geometry, Hamilton--Jacobi theory, path-integral extensions to stochastic and quantum settings---becomes natively applicable to learning dynamics. As a downstream consequence, learning becomes embeddable in physical substrates that \emph{are} least-action systems by construction, rather than requiring digital approximation of a symbolic procedure.

\paragraph{Contributions.}
\begin{itemize}[leftmargin=*, noitemsep, topsep=2pt]
    \item \textbf{Variational derivation.} The first derivation of exact 
    backpropagation from Hamilton's least-action principle, obtained by 
    specializing the Bateman--Galley formalism to the feedforward flow. 
    The construction extends to the full second-order action 
    (Appendix~\ref{app:mass}).
    \item \textbf{Exact gradient, exact discrete recovery.} The unique stable 
    equilibrium encodes the exact BP gradient as the stress between the two 
    conjugate fields; discretization recovers standard BP  digital algorithm. Empirically, on CIFAR-10 \citep{krizhevsky2009cifar} (released by the authors for non-commercial research use) with a 9-layer VGG, gradients match 
    BP to float32 precision ($\sim 10^{-7}$) at parity accuracy ($\sim 93\%$).
    \item \textbf{Physics toolkit becomes native.} Casting learning as a
least-action flow brings the analytical machinery of classical mechanics
---Noether's theorem, symplectic geometry, Hamilton--Jacobi theory,
path-integral methods---natively to bear on learning dynamics; we leave
the systematic exploitation of these tools to future work.
\end{itemize}
The remainder of the paper is organized as follows. Section~\ref{sec:preliminaries} fixes notation and reviews classical backpropagation. Section~\ref{sec:global} lifts the layer-wise dynamics to a global flow and constructs the variational framework, deriving the doubled-phase-space dynamics and analyzing its equilibria. Section~\ref{sec:discrete} proves exact recovery of standard BP. Section~\ref{sec:experiments} reports empirical validation. Section~\ref{sec:related} situates the contribution against prior variational and energy-based approaches.

\section{Classical Backpropagation}
\label{sec:preliminaries}
We fix notation. Consider a standard $L$-layer feedforward network with input
$\bm{x}_0 \in \mathbb{R}^{n_0}$ and output $\bm{a}_L \in \mathbb{R}^{n_L}$.
For each layer $\ell = 1, \dots, L$, the pre-activation
$\bm{z}_\ell \in \mathbb{R}^{n_\ell}$ and activation
$\bm{a}_\ell \in \mathbb{R}^{n_\ell}$ are given by
\begin{equation}
\bm{z}_\ell = \bm{W}_\ell \bm{a}_{\ell-1} + \bm{b}_\ell,
\qquad
\bm{a}_\ell = \sigma_\ell(\bm{z}_\ell),
\end{equation}
where $\bm{W}_\ell \in \mathbb{R}^{n_\ell \times n_{\ell-1}}$,
$\bm{b}_\ell \in \mathbb{R}^{n_\ell}$, and $\sigma_\ell$ is an elementwise
nonlinearity; for $\ell = L$, $\bm{a}_L$ is the network output. We collect
all parameters into
$\bm{\theta} = \{\bm{W}_\ell, \bm{b}_\ell\}_{\ell=1}^{L}$, and denote the
supervised loss by $C(\bm{a}_L, \bm{y})$. Backpropagation computes gradients
via the recursion
\[
\bm{\delta}_\ell = (\bm{W}_{\ell+1}^\top \bm{\delta}_{\ell+1}) \odot 
\sigma'_\ell(\bm{z}_\ell), \qquad
\bm{\delta}_L = \nabla_{\bm{a}_L} C \odot \sigma_L'(\bm{z}_L),
\]
yielding $\partial C/\partial\bm{W}_\ell = \bm{\delta}_\ell \bm{a}_{\ell-1}^\top$ 
and $\partial C/\partial\bm{b}_\ell = \bm{\delta}_\ell$. The layer-wise 
structure induces a strict backward ordering, which we reinterpret as a 
global dynamical system in what follows.

\section{From Layers to Global Flow}
\label{sec:global}

We now move from the layer-wise discrete dynamics of classical backpropagation
(Section~\ref{sec:preliminaries}) to a \emph{global} description.
The key step is to reinterpret the layer index as a discrete time coordinate
and then embed this into a continuous-time vector field defined on a single
stacked state.
This will allow us to apply the Lagrangian theory of non-conservative systems
to the network as a whole in Section~\ref{sec:constr_ener}.

\subsection{Global Activation Vector and Weight Matrix}
We stack all layer activations and bias/input drives into single global
vectors, and collect the inter-layer weights into a single global matrix:
\begin{equation}
\bm{a} =
\begin{bmatrix}
\bm{a}_1 \\ \bm{a}_2 \\ \vdots \\ \bm{a}_L
\end{bmatrix},
\quad
\bm{\beta}(\bm{x}_0) =
\begin{bmatrix}
\bm{W}_1 \bm{x}_0 + \bm{b}_1 \\
\bm{b}_2 \\
\vdots \\
\bm{b}_L
\end{bmatrix},
\quad
\bm{W} =
\begin{bmatrix}
\bm{0}   & \bm{0}   & \cdots & \bm{0}   \\
\bm{W}_2 & \bm{0}   & \cdots & \bm{0}   \\
\vdots   & \ddots   & \ddots & \vdots   \\
\bm{0}   & \cdots   & \bm{W}_L & \bm{0}
\end{bmatrix},
\label{eq:global_W}
\end{equation}
with $\bm{a}, \bm{\beta} \in \mathbb{R}^{n}$ and $n = \sum_{\ell=1}^{L} n_\ell$.
The matrix $\bm{W}$ is strictly lower block-triangular: information flows only
from layer $\ell-1$ to layer $\ell$. The forward pass over all layers can then
be written compactly as the unique fixed point of
\begin{equation}
\bm{a} = \bm{\sigma}(\bm{W}\bm{a} + \bm{\beta}(\bm{x}_0)),
\label{eq:global_fixed_point_prelims}
\end{equation}
where $\bm{\sigma}$ denotes the stacked nonlinearity acting elementwise.

\subsection{The Forward Vector Field}
\label{subsec: forward_vf}
In preparation for the physical formulation, we then proceed to generalize the
feedforward dynamics from discrete to continuous time as the fixed point of a global relaxation dynamics.
We introduce a time-dependent global activation state $\bm{a}(t)$ evolving under
the vector field
\begin{equation}
\frac{d\bm{a}(t)}{dt}
=
\bm{\sigma}\!\big(\bm{W}\bm{a}(t) + \bm{\beta}(\bm{x}_0)\big)
- \bm{a}(t)
=: F(\bm{a}(t)).
\label{eq:global_ct_dynamics}
\end{equation}
By construction, the equilibria of~\eqref{eq:global_ct_dynamics} are exactly the
solutions of the fixed-point equation~\eqref{eq:global_fixed_point_prelims}.
In particular, the standard forward pass configuration corresponds to the unique
stable equilibrium $\bar{\bm{a}}$ satisfying
$\bar{\bm{a}} = \bm{\sigma}(\bm{W}\bar{\bm{a}} + \bm{\beta}(\bm{x}_0))$.

\subsection{Nilpotency}

It is useful to notice that in this global framework, the global connection matrix $\bm{W}$ is strictly block lower-triangular and this implies a strong algebraic property.

\begin{lemma}[Nilpotency of the Global Weight Matrix]
\label{lem:nilpotency}
For an $L$-layer feedforward network, the global matrix $\bm{W}$ satisfies:
\[
\bm{W}^L = \bm{0}.
\]
\end{lemma}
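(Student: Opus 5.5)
We view $\bm{W}$ as an $L\times L$ array of blocks, indexed by layers, and show directly that its powers push nonzero blocks further below the diagonal. From \eqref{eq:global_W}, the $(i,j)$ block of $\bm{W}$ is $\bm{W}_i$ when $i = j+1$ and $\bm{0}$ otherwise. Thus $\bm{W}$ maps the layer-$j$ component of a stacked vector only into the layer-$(j+1)$ component.

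The key step is an induction on $k \ge 1$ for the block structure of $\bm{W}^k$. The claim is that the $(i,j)$ block of $\bm{W}^k$ vanishes unless $i = j+k$. In that case it equals the ordered product $\bm{W}_{j+k}\bm{W}_{j+k-1}\cdots\bm{W}_{j+1}$, though for nilpotency only the vanishing pattern matters.

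The base case $k=1$ is the definition. For the inductive step, write $(\bm{W}^{k+1})_{ij} = \sum_m (\bm{W})_{im}(\bm{W}^k)_{mj}$. The first factor is nonzero only if $m = i-1$, and the second only if $m = j+k$. A nonzero term therefore forces $i-1 = j+k$, i.e.\ $i = j+k+1$, which is what we need.

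Taking $k = L$, a nonzero block would require $i = j + L \ge 1 + L > L$. This is impossible, since block indices range over $1,\dots,L$. Hence every block of $\bm{W}^L$ is zero and $\bm{W}^L = \bm{0}$.

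An equivalent basis-free route defines the subspaces $V_{\ge m} = \{\bm{a} : \bm{a}_1 = \cdots = \bm{a}_{m-1} = \bm{0}\}$. Then $\bm{W} V_{\ge m} \subseteq V_{\ge m+1}$, because the first block row is zero and the output in layer $\ell$ depends only on the input in layer $\ell-1$. Iterating gives $\bm{W}^L \mathbb{R}^n \subseteq V_{\ge L+1} = \{\bm{0}\}$. I would present the block-index induction as the main argument and mention this filtration view as the intuition that information reaches at most $L-1$ layers downstream.

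There is no real obstacle here. The only point needing care is the index bookkeeping: the layer-$1$ block row is identically zero, since $\bm{W}_1$ is absorbed into $\bm{\beta}$, so the subdiagonal blocks run over $\bm{W}_2,\dots,\bm{W}_L$. This means $\bm{W}^{L-1}$ already has at most one nonzero block, and $L$ is a safe, not necessarily sharp, exponent. In fact the argument shows $\bm{W}^{L}=\bm{0}$ with room to spare, which is all the statement asks.
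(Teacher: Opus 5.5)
Your proof is correct and uses the same block-shift argument as the paper: each multiplication by $\bm{W}$ moves nonzero blocks one block row down, so nothing survives after $L$ multiplications. Your induction on the $(i,j)$ block pattern of $\bm{W}^k$ just makes this rigorous. One correction to your closing remark: there is no ``room to spare,'' because the $(L,1)$ block of $\bm{W}^{L-1}$ is $\bm{W}_L\bm{W}_{L-1}\cdots\bm{W}_2$, which is generically nonzero, so $L$ is generically the exact nilpotency index (absorbing $\bm{W}_1$ into $\bm{\beta}$ is what makes the index $L$ rather than $L+1$, not a source of slack).
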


\begin{proof}
Because $\bm{W}$ has nonzero blocks only immediately below the diagonal, each multiplication by $\bm{W}$ shifts nonzero contributions at most one block row down.
After $L$ such shifts, all contributions have moved past the last block row and vanish, hence $\bm{W}^L = \bm{0}$.
\end{proof}

Nilpotency captures the acyclic structure of feedforward networks: information propagates strictly layer by layer with no feedback cycles.
This property will imply finite-time convergence for the relaxation dynamics constructed in later sections.

\subsection{The Least Action path}
\label{sec:constr_ener}

The first step is to build a Lagrangian for our formulation. To do this we first have to address the \emph{non-reciprocal} nature of $F$: information flows from layer $\ell$ to $\ell+1$ without symmetric feedback. This violation of Newton's third law implies that the network vector field $F$ previously defined in Eq. \ref{eq:global_ct_dynamics} is \emph{non-conservative} and cannot be derived from a standard scalar potential \cite{goldstein2002classical}.

To resolve this, we apply the Lagrangian theory of non-conservative
systems. While inspired by the work of \cite{bateman1931dissipative},
\cite{keldysh1964diagram}, and \cite{galley2013classical}, we modify
the formalism to enable learning through equilibria including
dissipation (see Appendix~\ref{app:mechanics_overview}). Crucially,
to variationally encode non-reciprocal interactions, this theory
entails a doubling of the phase space (see also
\citet{Scurria2026Dyadic} for a related doubled-state
construction in the equilibrium-propagation setting). We therefore
introduce a conjugate pair of global state vectors: a ``backward''
state $\bm{x}\in\mathbb{R}^n$ and a ``forward'' state
$\bm{z}\in\mathbb{R}^n$.

Starting from the definition of the force $F$, we can thus build the
Lagrangian of our system (see Appendix \ref{app:lagr_build} for a
detailed explanation):
\vspace{-0.5em}
\begin{equation}
\mathcal{L}(\bm{x},\bm{z})
=
\underbrace{(\bm{x}-\bm{z})^{\!\top}
F\!\left(\tfrac{\bm{x}+\bm{z}}{2}\right)}_{\text{Lagrangian Lifting}}
+
\underbrace{C\!\left(
\Big[\tfrac{\bm{x}+\bm{z}}{2}\Big]_L,
\bm{y}
\right)}_{\text{Cost}}.
\label{eq:energy_xz_icml}
\end{equation}
Notice that in this Lagrangian we do not have a mass term (usually termed 'overdamped regime') to adapt to the dynamics of the previous section in which we do not have an acceleration term; this can be however included obtain the same equilibria and stability properties (see Appendix \ref{app:mass}). The inclusion of the cost directly in the Lagrangian breaks the symmetry of the two states $\bm{x}$ and $\bm{z}$ with a perturbation that will encode the gradient (see Appendix \ref{app:lagr_build}). 

The equations of motion for the doubled system are then obtained through the least action principle:
\begin{equation}
    \delta\left(\int_{t_1}^{t_2}\mathcal{L}(x,\dot{x},z,\dot{z})dt\right)=0
\end{equation}
\vspace{-0.5em}
This gives us the equations
\begin{equation}
\begin{aligned}
\frac{d}{dt}\left(\frac{\partial \mathcal{L}}{\partial \dot{x}}\right) - \frac{\partial \mathcal{L}}{\partial x} &= -\frac{\partial R}{\partial \dot{x}} \implies  \dot{x} = +\frac{\partial \mathcal{L}}{\partial x} \\
-\left(\frac{d}{dt}\left(\frac{\partial \mathcal{L}}{\partial \dot{z}}\right) - \frac{\partial \mathcal{L}}{\partial z}\right) &= -\frac{\partial R}{\partial \dot{z}} \implies \dot{z} = -\frac{\partial \mathcal{L}}{\partial z}
    \end{aligned}
    \end{equation}
\vspace{-0.5em}
\noindent 
where $R = \frac{1}{2}(\dot{x}^2 + \dot{z}^2)$ is the so called Rayleigh dissipation function. Thus we have:
\begin{equation}
\begin{aligned}
\frac{d\bm{x}}{dt}
&=
\underbrace{F\!\left(\tfrac{\bm{x}+\bm{z}}{2}\right)}_{\text{Network Relaxation}}
+ \underbrace{\tfrac{1}{2}\,[\nabla F(\tfrac{\bm{x}+\bm{z}}{2})]^{\!\top}\,(\bm{x}-\bm{z})}_{\text{Backward Signal}}
+ \underbrace{\tfrac{1}{2}
\begin{bmatrix} \bm{0} \\ \vdots \\ \nabla_{\bm{a}_L} C\!\big([\tfrac{\bm{x}+\bm{z}}{2}]_L,\bm{y}\big) \end{bmatrix}}_{\text{Cost}},
\\
\frac{d\bm{z}}{dt}
&=
\underbrace{F\!\left(\tfrac{\bm{x}+\bm{z}}{2}\right)}_{\text{Network Relaxation}}
- \underbrace{\tfrac{1}{2}\,[\nabla F(\tfrac{\bm{x}+\bm{z}}{2})]^{\!\top}\,(\bm{x}-\bm{z})}_{\text{Backward Signal}}
- \underbrace{\tfrac{1}{2}
\begin{bmatrix} \bm{0} \\ \vdots \\ \nabla_{\bm{a}_L} C\!\big([\tfrac{\bm{x}+\bm{z}}{2}]_L,\bm{y}\big) \end{bmatrix}}_{\text{Cost}}.
\label{eq:xz_saddle_dynamics}
\end{aligned}
\end{equation}

These dynamics consist of three distinct physical forces. \textbf{Network Relaxation} (first term in both equations) is identical for both states, driving $\bm{x}$ and $\bm{z}$ toward the forward pass configuration. \textbf{Backward Signal} uses the stress $\bm{x-z}$, to propagate error information. \textbf{Cost} acts as an external force applied only to the output block.

Crucially, this formalism requires identical initial conditions for both x and z (also called "the physical limit", see \cite{galley2013classical}). In this scenario, absent the cost term, both states collapse to the same dynamics, corresponding to the forward pass dynamics previously defined in section \ref{subsec: forward_vf}. Thus the cost is perturbing the system in order to encode the gradients in the stress term $(\bm{x}-\bm{z})$ (see Appendix \ref{app:symmetry_breaking}).

The Euler discretization of the flow~\eqref{eq:xz_saddle_dynamics} yields a joint update in which both states evolve on the same lattice at every step providing, as we will prove later, exact gradients with only local computations in both space and time; we refer to this scheme as Dyadic Backpropagation (Algorithm~\ref{alg:xz_schematic}, deferred to Appendix~\ref{app:dbp_alg}).
A simplified dynamical scheme (Appendix~\ref{app:split_dynamics}), tested with identical accuracy, is also provided to facilitate hardware implementation.

\subsection{Mean-Stress Variables}
\label{sec:m-s}
To better inspect the interaction between $\bm{x}$ and $\bm{z}$, we work with the mean--stress variables $\bm{m}=\tfrac12(\bm{x}+\bm{z})$ and $\bm{s}=\bm{x}-\bm{z}$ already introduced above. Differentiating these relations yields the transformed system:
\begin{equation}
\begin{aligned}
\label{eq:ms_dynamics}
\frac{d\bm{m}}{dt}
&=
\bm{\sigma}(\bm{W}\bm{m} + \bm{\beta}(\bm{x}_0)) - \bm{m},
\\
\frac{d\bm{s}}{dt}
&=
\big(\bm{W}^\top \bm{D}(\bm{m}) - \bm{I}\big)\bm{s}
+ \nabla_{\bm{m}_L}C(\bm{m}_L,\bm{y}),
\end{aligned}
\end{equation}
where $\bm{D}(\bm{m}) = \operatorname{diag}\!\big(\bm{\sigma}'(\bm{W}\bm{m} + \bm{\beta}(\bm{x}_0))\big)$ is the diagonal matrix of local activation derivatives evaluated at the current mean state.

In this representation, the physical roles of the two variables become
clear: $\bm{m}$ relaxes toward a consistent forward activation pattern, while
$\bm{s}$ records the tension that enforces this consistency and channels
sensitivity information backward through the network.
% It is also instructive to notice that, after the relaxation of $\bm{m}$, the Lagrangian of the system equals the task loss (see Appendix \ref{app:eq_en}).

\subsection{The Equilibrium States}
We now inspect the equilibria of the system in the $\bm{m}$, $\bm{s}$ variables to
better understand the configuration to which the system converges. The equilibria
$\bar{\bm{m}}$ and $\bar{\bm{s}}$ satisfy
\begin{equation}
\bm{\sigma}(\bm{W}\bar{\bm{m}} + \bm{\beta}(\bm{x}_0)) = \bar{\bm{m}},
\qquad
(\bm{I}-\bm{W}^\top \bar{\bm{D}})\,\bar{\bm{s}} = \nabla_{\bar{\bm{m}}_L}C(\bar{\bm{m}}_L,\bm{y}).
\label{eq:s_equilibrium_linear_full}
\end{equation}
The first equation enforces relaxation to the forward pass configuration. The
second yields exactly the same result as backpropagation (see
Appendix~\ref{subsec:bp_eq}): $\bar{\bm{s}}$ collects the total derivatives of
the cost with respect to the equilibrium activations,
$\bar{\bm{s}}_\ell = \nabla_{\bar{\bm{m}}_\ell} C$. Stacking everything into a
single vector and applying the diagonal derivative $\bar{\bm{D}}$ recovers the
classical pre-activation errors:
\begin{equation}
\bar{\bm{s}}=
\begin{bmatrix}
\nabla_{\bar{\bm{m}}_1} C \\
\nabla_{\bar{\bm{m}}_2} C \\
\vdots \\
\nabla_{\bar{\bm{m}}_L} C
\end{bmatrix},
\qquad
\bm{\Delta}=\bar{\bm{D}}\,\bar{\bm{s}}
=
\begin{bmatrix}
\bm{\delta}_1 \\
\bm{\delta}_2 \\
\vdots \\
\bm{\delta}_L
\end{bmatrix},
\qquad
\bm{\delta}_\ell=D_\ell(\bar{\bm{m}}_\ell)\,\bar{\bm{s}}_\ell,
\label{eq:backprop_s}
\end{equation}
where $D_\ell(\bar{\bm{m}}_\ell)$ is the diagonal derivative matrix at layer $\ell$.
Finally, this equilibrium configuration $(\bar{\bm{m}},\bar{\bm{s}})$ is globally
exponentially stable (see Appendix~\ref{subsec:stability}).

\subsection{Gradient Computation}
\label{sec:grad_comp}
Now we use the equilibrium configuration of our system to compute the total derivative of the cost with respect to the weights, denoted $\nabla_{\bm{W}} C$. Using the chain rule, the total gradient can be expressed in terms of the equilibrium variables:
\begin{equation}
\nabla_{\bm{W}} C
=\sum_{j} \nabla_{\bar{\bm{m}}_{j}} C \frac{\partial \bar{\bm{m}}_{j}}{\partial \bm{W}}=
\sum_{j} \bar{s}_j \frac{\partial (\bm{\sigma}(\bm{W}\bar{\bm{m}} + \bm{\beta}))_j}{\partial \bm{W}}.
\label{eq:chain_rule_adjoint}
\end{equation}

Evaluating the partial derivative of the update rule $\bar{m}_i = \sigma_i(\sum_j W_{ij}\bar{m}_j + \beta_i)$ yields:
\begin{equation}
\frac{\partial \bar{m}_i}{\partial W_{ij}}
=
D_{ii}(\bar{\bm{m}})\, \bar{m}_j,
\end{equation}
where $D_{ii}(\bar{\bm{m}})$ denotes the $i$-th block of $\bm{D}(\bar{\bm{m}})$.
Substituting this into~\eqref{eq:chain_rule_adjoint}, we obtain the standard gradient outer product:
\begin{equation}
\nabla_{\bm{W}} C
=
\bm{\Delta}\,\bar{\bm{m}}^{\!\top},
\qquad
\bm{\Delta}=\bm{D}(\bar{\bm{m}})\,\bar{\bm{s}}.
\label{eq:global_chain_rule_explicit_full}
\end{equation}
In layer form, this recovers the familiar local gradients:
\begin{align}
\nabla_{\bm{W}_\ell} C
&=
\bm{\delta}_\ell\,\bar{\bm{m}}_{\ell-1}^{\!\top},
\\
\nabla_{\bm{b}_\ell} C
&=
\bm{\delta}_\ell.
\end{align}
These expressions highlight the two-factor structure of gradient computation: a pre-synaptic activation $\bar{\bm{m}}_{\ell-1}$ and a post-synaptic variable $\bm{\delta}_\ell$.

% -----------------------------------------------------
% -----------------------------------------------------
% -----------------------------------------------------
% -----------------------------------------------------
% -----------------------------------------------------
% -----------------------------------------------------
\section{The Discrete Trace: Recovering Backpropagation}
\label{sec:discrete}

We now study the discretized version of the continuous-time $(\bm{m},\bm{s})$ dynamics Eq.~\eqref{eq:ms_dynamics} to explicitly show equivalence to the backpropagation dynamics.

\subsection{Canonical Discretization and the Natural Time Step}

Applying forward Euler with step size $\Delta t>0$ to the equation \ref{eq:ms_dynamics} gives the discrete updates:
\begin{align}
\bm{m}^{k+1}
&=
\bm{m}^k
+
\Delta t\left[
\bm{\sigma}(\bm{W}\bm{m}^k + \bm{\beta}(\bm{x}_0))
- \bm{m}^k
\right],
\label{eq:euler_m_general_full}
\\
\bm{s}^{k+1}
&=
\bm{s}^k
+
\Delta t\left[
\big(\bm{W}^\top \bm{D}(\bm{m}^k) - \bm{I}\big)\bm{s}^k
+ \nabla_{\bm{m}_L}C(\bm{m}_L^k,\bm{y})
\right].
\label{eq:euler_s_general_full}
\end{align}
Standard layer-by-layer backpropagation corresponds to choosing a \emph{unit} Euler step, $\Delta t = 1$.
This specific choice exactly cancels the inertial term $(1-\Delta t)\bm{m}^k$.
In that case, the equations simplify to the coupled map:
\begin{align}
\bm{m}^{k+1}
&=
\bm{\sigma}(\bm{W}\bm{m}^k + \bm{\beta}(\bm{x}_0)),
\label{eq:m_discrete_2}
\\
\bm{s}^{k+1}
&=
\bm{W}^\top \bm{D}(\bm{m}^k)\bm{s}^k
+
\nabla_{\bm{m}_L}C(\bm{m}_L^k,\bm{y}).
\label{eq:s_discrete_2}
\end{align}

\subsection{Nilpotency and Exact Convergence in 2L Steps}

 We now show that we obtain finite time convergence of the discretized dynamical equations (\ref{eq:m_discrete_2})(\ref{eq:s_discrete_2}) in exactly $2L$ steps, where $L$ corresponds to the depth of the network.
This can be easily seen because $\bm{W}$ is strictly lower block-triangular. Then each block $\bm{m}_\ell$ depends only on upstream layers $\bm{m}_1,\dots,\bm{m}_{\ell-1}$, ensuring strictly sequential convergence.

\begin{lemma}[Forward Layer Freezing]
For the iteration~\eqref{eq:m_discrete_2} with initial condition $\bm{m}^0$, the $\ell$-th block satisfies
\[
\bm{m}_\ell^k = \bm{a}_\ell
\quad\text{for all}\quad k\ge \ell,
\]
where $\bm{a}_\ell$ denotes the standard forward activation at layer $\ell$.
\end{lemma}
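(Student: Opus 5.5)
We argue by induction on the layer index $\ell$, exploiting the block structure of the global weight matrix $\bm{W}$ in~\eqref{eq:global_W}. First we write the iteration~\eqref{eq:m_discrete_2} block by block. Since the only nonzero blocks of $\bm{W}$ are the subdiagonal blocks $\bm{W}_\ell$ for $\ell\ge 2$, and $\bm{\beta}$ carries the input drive in its first block, the update decouples as
\[
\bm{m}_1^{k+1}=\sigma_1(\bm{W}_1\bm{x}_0+\bm{b}_1),\qquad
\bm{m}_\ell^{k+1}=\sigma_\ell(\bm{W}_\ell\bm{m}_{\ell-1}^{k}+\bm{b}_\ell)\quad(\ell\ge 2).
\]
This holds because $\bm{\sigma}$ acts elementwise, so the stacked nonlinearity respects the block partition. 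The key observation is that block $\ell$ at step $k+1$ depends only on block $\ell-1$ at step $k$.

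\textbf{Base case ($\ell=1$).} The first block at any step $k\ge 1$ equals $\sigma_1(\bm{W}_1\bm{x}_0+\bm{b}_1)=\bm{a}_1$. This is independent of $\bm{m}^0$, so $\bm{m}_1^k=\bm{a}_1$ for all $k\ge 1$.

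\textbf{Inductive step.} Suppose $\bm{m}_{\ell-1}^k=\bm{a}_{\ell-1}$ for all $k\ge \ell-1$. For any $k\ge\ell$ we have $k-1\ge\ell-1$, so
\[
\bm{m}_\ell^{k}=\sigma_\ell(\bm{W}_\ell\bm{m}_{\ell-1}^{k-1}+\bm{b}_\ell)=\sigma_\ell(\bm{W}_\ell\bm{a}_{\ell-1}+\bm{b}_\ell)=\bm{a}_\ell,
\]
using the forward recursion of Section~\ref{sec:preliminaries}. Induction up to $\ell=L$ completes the argument. As a consequence, $\bm{m}^k=\bar{\bm{m}}$ for $k\ge L$, consistent with the nilpotency $\bm{W}^L=\bm{0}$ of Lemma~\ref{lem:nilpotency}.

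The only step needing care is the block decomposition. One must check that the first block of $\bm{W}\bm{m}$ is zero, so $\bm{m}_1$ never depends on the initial state. One must also check that the index shift is exactly one layer per iteration; this is what yields the threshold $k\ge\ell$ rather than something weaker. Everything else is routine bookkeeping.
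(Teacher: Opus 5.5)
Your proof is correct and takes essentially the same route as the paper: induction on $\ell$. The base case uses that $\bm{m}_1^{k+1}=\sigma_1(\bm{W}_1\bm{x}_0+\bm{b}_1)$ regardless of $\bm{m}^0$, and the inductive step uses that block $\ell$ at step $k+1$ depends only on block $\ell-1$ at step $k$. Your explicit block-by-block rewriting of~\eqref{eq:m_discrete_2} simply spells out what the paper's argument leaves implicit.
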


\begin{proof}
By induction on $\ell$, using the strict block lower-triangularity of $\bm{W}$. See Appendix~\ref{app:forward_freezing_proof} for the full argument.
\end{proof}

Let $\bar{\bm{m}}$ denote the stacked forward activations at equilibrium. It follows that for all $k\ge L$, $\bm{m}^k=\bar{\bm{m}}$. Thus the forward relaxation converges in exactly $L$ iterations.

We now need to inspect the convergence of the $\bm{s}$ variable.

Crucially, the $\bm{s}$ update in Eq.~\eqref{eq:s_discrete_2} uses the matrix $\bm{D}(\bm{m}^k)$.
We can then use the fact that for $k\geq L$ we get
\[
\bm{D}(\bm{m}^k) = \bm{D}(\bar{\bm{m}}) =: \bar{\bm{D}}, \quad \text{for all } k \geq L.
\]
Consequently, for all subsequent steps $k \ge L$, the update for $\bm{s}$ becomes a linear dynamical system with a \emph{constant} transition matrix $\bar{\bm{M}} = \bm{W}^\top \bar{\bm{D}}$:
\[
\bm{s}^{k+1} = \bar{\bm{M}} \bm{s}^k + \nabla C, \quad \text{for } k \ge L.
\]
The convergence of this phase relies on the algebraic properties of $\bar{\bm{M}}$. Because the network is feedforward, $\bar{\bm{M}}$ is strictly upper block-triangular and thus \emph{nilpotent} with index $L$ (i.e., $\bar{\bm{M}}^L = \bm{0}$).

This nilpotency guarantees that any transient (or residual) information accumulated in $\bm{s}$ during the initial transient phase (steps $0$ to $L-1$, when $\bm{D}$ is changing) is progressively shifted out of the network. After exactly $L$ applications of the constant operator $\bar{\bm{M}}$ (i.e., at step $k = L + L = 2L$), the memory of the initial transient is completely annihilated, and the state settles exactly into the fixed point previously discussed (see Eq.~\ref{eq:backprop_s}).

More formally:

\begin{theorem}[Discrete Equivalence to Standard Backpropagation]
\label{thm:autonomous_convergence}
Let the coupled system evolve simultaneously through Eqs.~\ref{eq:m_discrete_2}~and~\ref{eq:s_discrete_2}. Then:
\begin{enumerate}
    \item \textbf{Forward Settling ($k < L$):} The primal state converges to the valid network activations at $k=L$, i.e., $\bm{m}^k = \bar{\bm{m}}$ for all $k \geq L$.
    \item \textbf{Backward Flushing ($L \leq k < 2L$):} With $\bm{m}$ fixed, the dual state flushes out transient errors and converges to the exact backpropagation gradients at $k=2L$, i.e., $\bm{s}^k = \bar{\bm{s}}$ for all $k \geq 2L$.
\end{enumerate}
The system thus reproduces the backpropagation gradient autonomously in $2L$ steps, without external mode switching.
\end{theorem}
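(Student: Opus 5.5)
The argument splits into two phases, mirroring the two items of the statement.

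\textbf{Forward settling.} Part 1 follows directly from the Forward Layer Freezing lemma. For every $\ell\le L$ we have $\bm{m}^k_\ell=\bm{a}_\ell$ whenever $k\ge \ell$. Hence for all $k\ge L$ every block is frozen, so $\bm{m}^k=\bar{\bm{m}}$. This holds for any initialization $\bm{m}^0$, because the first block depends only on $\bm{\beta}(\bm{x}_0)$ and block $\ell$ depends only on blocks $\ell-1$ and earlier. Consequently $\bm{D}(\bm{m}^k)=\bar{\bm{D}}$ and $\nabla_{\bm{m}_L}C(\bm{m}^k_L,\bm{y})=\nabla_{\bar{\bm{m}}_L}C=:\bm{g}$ for all $k\ge L$. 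Note that the cost term freezes as well, since it depends only on $\bm{m}^k_L$.

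\textbf{Backward flushing.} For $k\ge L$ the recursion is affine with a constant operator:
\[
\bm{s}^{k+1}=\bar{\bm{M}}\bm{s}^k+\bm{g},\qquad \bar{\bm{M}}=\bm{W}^\top\bar{\bm{D}}.
\]
First I would establish that $\bar{\bm{M}}^L=\bm{0}$. Since $\bar{\bm{D}}$ is block diagonal and $\bm{W}^\top$ is strictly upper block-triangular with nonzero blocks only on the first superdiagonal, their product has the same sparsity pattern. The block-shift argument of Lemma~\ref{lem:nilpotency}, applied to upper rather than lower shifts, then gives nilpotency of index at most $L$.

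Next I would introduce the error $\bm{e}^k=\bm{s}^k-\bar{\bm{s}}$. Here $\bar{\bm{s}}$ solves $(\bm{I}-\bar{\bm{M}})\bar{\bm{s}}=\bm{g}$, which is Eq.~\eqref{eq:s_equilibrium_linear_full}. The solution is unique because $\bm{I}-\bar{\bm{M}}$ is invertible, with inverse given by the finite Neumann series $\sum_{j=0}^{L-1}\bar{\bm{M}}^j$. Subtracting the fixed-point identity $\bar{\bm{s}}=\bar{\bm{M}}\bar{\bm{s}}+\bm{g}$ from the recursion yields $\bm{e}^{k+1}=\bar{\bm{M}}\bm{e}^k$ for $k\ge L$. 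Hence $\bm{e}^{L+j}=\bar{\bm{M}}^j\bm{e}^L$, which vanishes for $j\ge L$. Thus $\bm{s}^k=\bar{\bm{s}}$ for all $k\ge 2L$, regardless of the transient value $\bm{s}^L$. That value may be arbitrary, since it was built from the varying $\bm{D}(\bm{m}^k)$ during $k<L$, but it is finite.

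\textbf{Identification with backpropagation.} Finally I would check that $\bar{\bm{s}}$ coincides with the BP quantities, as asserted around Eq.~\eqref{eq:backprop_s}. Reading $(\bm{I}-\bm{W}^\top\bar{\bm{D}})\bar{\bm{s}}=\bm{g}$ block by block gives two relations. At the output, $\bar{\bm{s}}_L=\nabla_{\bm{a}_L}C$, since the row of $\bm{W}^\top$ for block $L$ is zero. For $\ell<L$, $\bar{\bm{s}}_\ell=\bm{W}_{\ell+1}^\top D_{\ell+1}\bar{\bm{s}}_{\ell+1}$. Setting $\bm{\delta}_\ell=D_\ell\bar{\bm{s}}_\ell$ turns these relations into exactly the BP recursion of Section~\ref{sec:preliminaries}.

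The main obstacle is bookkeeping rather than depth: $\bm{D}(\bm{m})$ in Eq.~\eqref{eq:s_discrete_2} is evaluated at the preactivation $\bm{W}\bm{m}+\bm{\beta}$. I must verify that the index alignment of $\bm{W}^\top\bm{D}$ pairs $D_{\ell+1}$ with $\bm{W}_{\ell+1}^\top$ in the correct block. I also need $\bm{D}(\bm{m}^k)$ to be frozen from $k=L$ onward, rather than one step later; this holds because the preactivations depend only on $\bm{m}^k$.
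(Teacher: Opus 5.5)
Your proposal is correct and follows the paper's own argument. Forward Layer Freezing gives $\bm{m}^k=\bar{\bm{m}}$ for $k\ge L$; after that the $\bm{s}$-recursion has the constant nilpotent operator $\bar{\bm{M}}=\bm{W}^\top\bar{\bm{D}}$, which annihilates the transient within $L$ further steps, and $\bar{\bm{s}}$ is identified with backpropagation via the finite Neumann series of Appendix~\ref{subsec:bp_eq}. Your explicit error recursion $\bm{e}^{k+1}=\bar{\bm{M}}\bm{e}^k$, your observation that the cost forcing also freezes at $k=L$, and your block-by-block check that $D_{\ell+1}$ pairs with $\bm{W}_{\ell+1}^\top$ make rigorous what the paper states only informally.
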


The corresponding pseudocode (Algorithm~\ref{alg:autonomous_bp}) is deferred to Appendix~\ref{app:autonomous_alg}.

Together, Theorem~\ref{thm:autonomous_convergence} and Algorithm~\ref{alg:autonomous_bp} establish that standard backpropagation is the exact discrete trace of the continuous $\bm{x}$--$\bm{z}$ dynamics in Eq. \ref{eq:xz_saddle_dynamics}. The conventional two-phase procedure of the forward and backward passes is thus revealed to be a computational optimization of this single physical process, skipping redundant updates during the system's natural settling phases (we stop updating $\bm{m}$ as soon as it settles to its equilibrium). Consequently, the familiar forward and backward passes can be framed not as arbitrary engineering choices, but as an optimized digital implementation of the intrinsic relaxation behaviors of the underlying physical system.

\section{Empirical Validation}
\label{sec:experiments}
We empirically validate the $(\bm{x},\bm{z})$ dynamics
Eq.~\eqref{eq:xz_saddle_dynamics} on CIFAR-10, comparing performance and
internal dynamics against standard backpropagation (BP). The dynamics are
integrated via forward Euler as detailed in Algorithm~\ref{alg:xz_schematic}
(Appendix~\ref{app:dbp_alg}).

\subsection{Setup}
We use a 9-layer VGG-style CNN ($\sim$5M parameters) with four convolutional blocks (Conv-ReLU-Conv-ReLU-Stride2Pool) of channel dimensions $[64, 128, 256, 512]$ and a linear classifier. Models are trained for 100 epochs with batch size 64 using SGD with Nesterov momentum ($\mu=0.9$), weight decay ($5\times 10^{-4}$), cosine learning-rate annealing ($0.035 \to 0.0002$), standard augmentation (random crops, horizontal flips, Cutout), and label smoothing ($\epsilon=0.1$). The Euler step size is varied over $\eta \in \{0.25, 0.50, 0.75, 1.00\}$ with up to $K_{\max}=1000$ relaxation iterations per training step; early stopping triggers when consecutive iterates differ by less than $10^{-6}$ in Euclidean norm, and all runs share random seeds. Gradient fidelity is measured against PyTorch~2.7.1 autodiff \citep{paszke2019pytorch} (BSD-3-Clause license) references via the global cosine similarity $\text{CosSim} = \langle \nabla_{\text{relax}}, \nabla_{\text{BP}} \rangle / (\|\nabla_{\text{relax}}\|\|\nabla_{\text{BP}}\|)$, the relative error $\|\nabla_{\text{relax}} - \nabla_{\text{BP}}\| / \|\nabla_{\text{BP}}\|$, and per-layer cosine similarity (additional metrics in Appendix~\ref{sec:appendix_experiments}). Code with fixed seeds is provided in the supplementary material for reproducibility. All experiments ran on a single NVIDIA RTX 6000 Ada (48~GB) with a 32-core CPU and 128~GB DDR5 RAM; each $\eta$ run takes $\sim$2~GPU-hours ($\sim$8~GPU-hours).

\subsection{Results}

Our experiments confirm the theoretical predictions, demonstrating that the proposed method achieves robust stability across a wide range of step sizes, consistent with theoretical bounds.

\textbf{Generalization Performance.}
As shown in Figure~\ref{fig:test_accuracy}, the choice of the discretization step size $\eta$  in Algorithm \ref{alg:xz_schematic} does not negatively impact the learning trajectory. Algorithm~\ref{alg:xz_schematic} achieves test accuracy parity with standard Backpropagation (BP), reaching a peak of roughly $93\%$. This indicates that the discrete-time dynamics successfully approximate the learning signal of the ideal gradient.
\begin{figure}[h]
    \centering
    \includegraphics[width=0.5\linewidth]{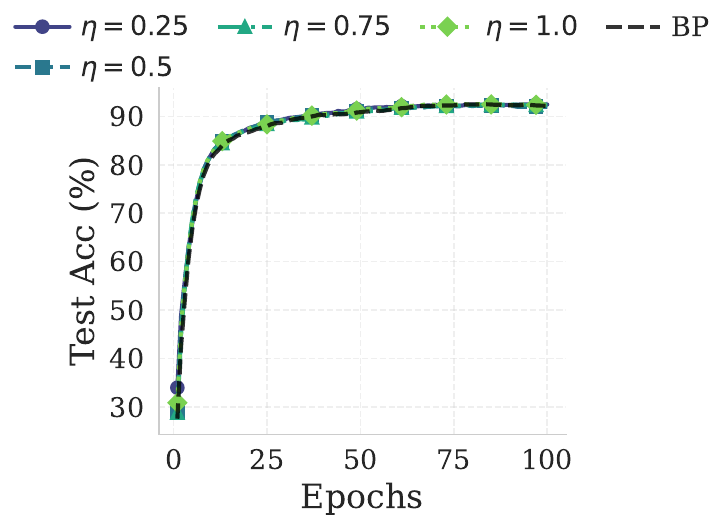}
    \vspace{-3mm}
    \caption{\textbf{Generalization Performance.} Test accuracy evolution for varying step sizes $\eta$ matches the BP baseline (dashed black), reaching $\sim93\%$.}
    \vspace{-3mm}
    \label{fig:test_accuracy}
\end{figure}

\begin{figure}[t]
    \centering
    \includegraphics[width=0.31\linewidth]{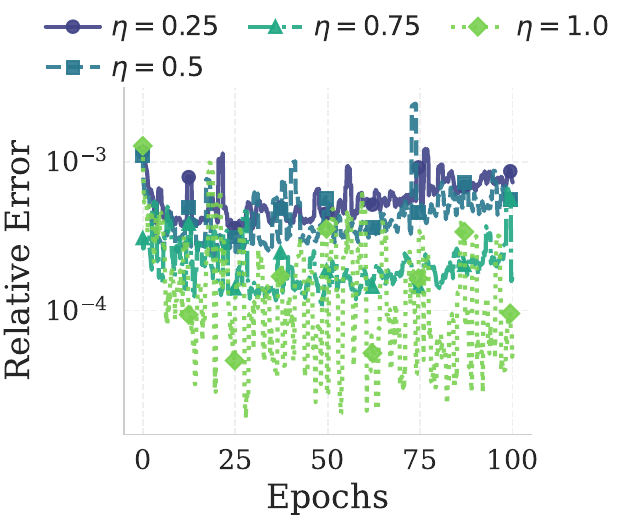}
    \hfill
    \includegraphics[width=0.31\linewidth]{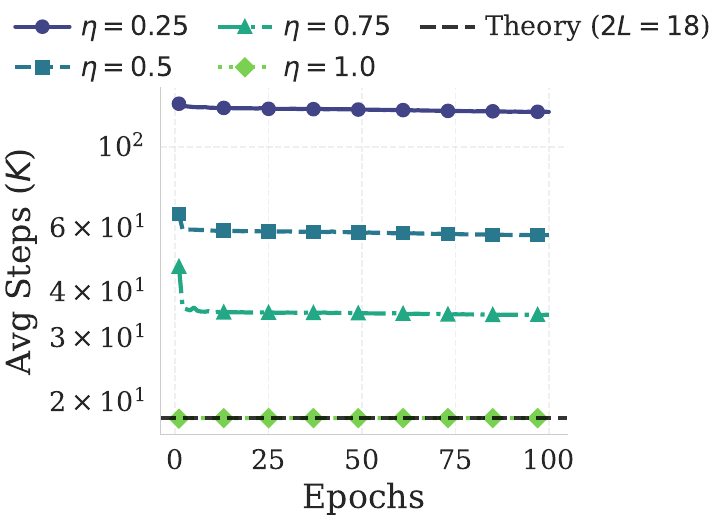}
    \hfill
    \includegraphics[width=0.31\linewidth]{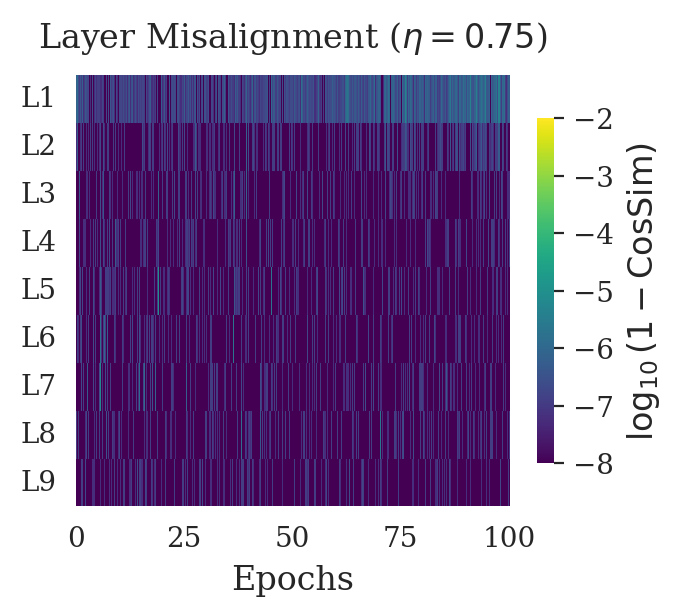}
    \vspace{-3mm}
    \caption{Empirical Validation of Dynamics. (left) \textbf{Relative gradient error} shows accurate recovery of $\nabla_{BP}$. (center) \textbf{Relaxation steps} converge to the theoretical limit $2L=18$ as $\eta\rightarrow1$. (right) \textbf{Log-misalignment heatmap} ($\eta=0.75$) plots $\log_{10}(1 - \text{CosSim})$ for each layer $\text{L}_i$. Values hover between $-6$ and $-8$, bounded by IEEE 754 Float32 machine precision ($\approx 10^{-7}$), rendering relaxation gradients basically indistinguishable from BP.}
    \vspace{-3mm}
    \label{fig:main_results}
\end{figure}

\textbf{Gradient Fidelity and Dynamics.}
Beyond generalization accuracy, we analyze the internal mechanics of the method. Figure~\ref{fig:main_results} corroborates our core theoretical claims along three axes. \emph{Precision} (left panel): the algorithm recovers the true gradient with low global relative error, almost entirely driven by a mismatch in Euclidean norm rather than directional deviation (see Appendix~\ref{app:norm}). \emph{Convergence speed} (center panel): as $\eta \to 1$, the number of required relaxation steps collapses toward the theoretical lower bound $2L=18$ (twice the network depth). \emph{Alignment} (right panel): the layer-wise log-misalignment $\log_{10}(1-\cos\theta)$ hovers near the IEEE 754 Float32 machine precision limit ($\approx 10^{-7}$), making the physical relaxation effectively indistinguishable from the symbolic algorithm (further details in Appendix~\ref{app:layer_cos}).

In sum, these results confirm that Algorithm~\ref{alg:xz_schematic} reliably recovers the exact gradients of standard backpropagation. This validates the central thesis that the physical relaxation process is not an approximation, but the faithful continuous-time generator of the discrete backpropagation update. Further experiments corroborating the accuracy of the method can be found in Appendix~\ref{sec:appendix_experiments}.
% -----------------------------------------------------
% SECTION: RELATED WORK
% -----------------------------------------------------
% -----------------------------------------------------
% -----------------------------------------------------
\section{Related Work}
\label{sec:related}
Our work connects to a broad literature interpreting backpropagation through dynamical and energy-based lenses. Below, we distinguish our formulation from prior approaches.

\begin{itemize}[leftmargin=*, nosep]

\item \textbf{LeCun Formulation of Backpropagation.}
LeCun's formulation \citep{lecun1988theoretical} casts
backpropagation as a constrained optimization problem in which the layer-wise
forward equations are enforced as equality constraints, with Lagrange
multipliers playing the role of the adjoint errors. The resulting solution is
purely algebraic: the multipliers are obtained as the closed-form stationary
point of the constrained Lagrangian, with no associated dynamics. LeCun is
in fact explicit on this point, noting in a footnote that his Lagrangian is
not the Lagrangian of physics, and indeed endowing his formulation with a
symplectic flow would not yield convergence to the target equilibrium, since
a non-dissipative flow, derived from the least action principle, cannot relax into a stationary point. Our framework
resolves this tension at its root: rather than positing a Lagrangian as an
algebraic optimization bookkeeping device, we construct one whose entire dynamical
content follows from the physical principle of least action. Backpropagation
then emerges not as a closed-form solution but as the discrete-time
projection of a continuous physical relaxation toward the same equilibrium,
with the gradient encoded locally in the stress variable $\bm{x}-\bm{z}$ ---
a locality absent from both LeCun's formulation and standard
backpropagation. More broadly, grounding the formulation in the principle of
least action makes the toolkit of theoretical physics natively
applicable to learning.

    \item \textbf{Energy-Based Methods: CHL and Equilibrium Propagation.}
Contrastive Hebbian Learning \citep{movellan1991contrastive} contrasts two
equilibria of an energy-minimizing network --- a free phase and an
output-clamped phase --- and recovers BP-aligned gradients only under
symmetric weights and. Equilibrium Propagation \citep{SB17} instead applies a weak
output nudge of strength $\beta$, recovering exact gradients only in the
limit $\beta\to 0$, again under weight symmetry. Both are also temporally
non-local: the update contrasts quantities at two equilibria reached in
separate phases, requiring storage of phase-one activations until phase
two has settled. The scalar-potential formulation underlying both is
moreover incompatible with feedforward Jacobians, which are non-reciprocal.
\item \textbf{Dual Propagation.}
Dual propagation \citep{DualPropagation-1} represents neuronal activities as
dyads with two oppositely nudged compartments and propagates error signals
via their difference, achieving local credit assignment in a single
inference phase. As shown in \cite{hoier2023lagrangian}, DP is most
faithfully derived as a reformulation of LeCun's Lagrangian framework, and
therefore inherits its structural character: it is an algebraic
constrained-optimization scheme rather than a physical dynamical system,
with no underlying action principle and no relaxation dynamics. Our
framework, by contrast, derives the doubled-variable structure from the
variational principle of least action applied to the network flow itself,
yielding exact gradients in finite time without requiring weak-feedback
limits, and extending naturally to second-order and other physically
motivated dynamics. At the linearised level and for $\alpha=1/2$, the
equilibrium configuration of our $(\bm{x},\bm{z})$ system aligns with that
of dual propagation, clarifying that DP can be viewed as a particular
algebraic shadow of the broader physical relaxation we describe.

\item \textbf{Continuous Adjoint Methods.}
Continuous-time adjoint methods \citep{neuralODEs, chen2018neural} compute
gradients by integrating an adjoint ODE backward in time. Such reverse-time
integration is not realisable physically---feedforward networks included---
since no dissipative substrate evolves against its own arrow of time, and
a contractive forward flow becomes explosively unstable under reversal.
The same obstruction manifests numerically, as instability
\citep{gholami2019anode} and inaccurate gradients \citep{zhuang2020adaptive},
with exactness recovered only as the step size vanishes. Our method instead
applies natively to dissipative, non-reciprocal flows, evolving
$(\bm{x},\bm{z})$ \emph{forward} in time and recovering the exact gradient
locally under unit-step Euler integration.

    \item \textbf{Recurrent Backpropagation.}
Recurrent Backpropagation \citep{almeida1990learning, pineda1987generalization}
computes gradients by solving a separate linear fixed-point equation for
the adjoint variables, engineered as an auxiliary system whose
sole purpose is gradient computation. Our framework instead derives both
the forward dynamics and the adjoint dynamics as the two coupled
Euler--Lagrange equations of a single global Lagrangian, so that inference
and credit assignment are not two systems with one bolted onto the other,
but two facets of one physical relaxation flow.

    \end{itemize}

\paragraph{Limitation: weight transport.}
Our dynamics \ref{eq:xz_saddle_dynamics} retain $\bm{W}^\top$ and thus inherit the weight-transport requirement of standard BP. This is orthogonal to our contribution, which concerns the variational structure of exact gradient computation; combining our framework with feedback alignment \citep{lillicrap2016random, nokland2016direct} or the Kolen--Pollack scheme \citep{kolen1994backpropagation}, is a natural direction we leave to future work.

\noindent\textbf{Summary.}
Unlike methods that rely on asymptotic limits ($\beta \to 0$), weight
symmetry, reverse-time integration, or engineered backward
circuits, our framework is derived end-to-end from the central
physical principle of least action, yielding a \emph{single variational
dynamical system} in which inference and credit assignment unfold together.
This system does not merely approximate the gradient; it frames standard
backpropagation as the exact \emph{discrete-time projection} of a
continuous physical relaxation.

\section{Conclusion}
We have framed backpropagation as the equilibrium of a least-action flow on a
doubled phase space: a single global Lagrangian generates dynamics in which
inference and credit assignment unfold simultaneously and through purely
local interactions, with the exact BP gradient emerging as the stress
between two conjugate fields. Unit-step Euler discretization recovers
standard BP in $2L$ steps, without vanishing-nudge limits, weight symmetry,
reverse-time integration, or engineered backward circuitry; the construction
extends naturally to the full second-order regime (Appendix~\ref{app:mass}).
Beyond unifying inference and gradient computation, this reformulation makes
the toolkit of classical mechanics---Noether's theorem, symplectic geometry,
path-integral extensions---natively applicable to learning, and points
toward analog and neuromorphic substrates in which credit assignment is
carried by the natural relaxation of the hardware itself. The doubled-variable
structure also echoes multi-compartment cortical neurons
\citep{guerguiev2017towards}, hinting at relevance for biological credit
assignment.

% NeurIPS uses the `ack' environment for acknowledgments. It is automatically
% hidden in the anonymized submission and shown in the final / preprint version.
\begin{ack}
AES acknowledges financial support from the Horizon Europe Marie Sk\l odowska-Curie Doctoral Network ``Postdigital Plus'' (Grant 101169118). AES also thanks Serge Massar, Dimitri Vanden Abeele, Bortolo Matteo Mognetti, and Alessandro Salvatore for fruitful discussions and valuable suggestions. Computational resources have been provided by the Consortium des \'Equipements de Calcul Intensif (C\'ECI), funded by the Fonds de la Recherche Scientifique de Belgique (F.R.S.-FNRS) under Grant No.~2.5020.11 and by the Walloon Region.
\end{ack}

% % Visual Separation
% \vspace{2em}
% \hfill \textit{``Vanitas vanitatum et omnia vanitas''}

\newpage

\bibliographystyle{plainnat}
\bibliography{bibliography}

\appendix
\newpage
\section{Overview on Out-of-Equilibrium Mechanics}
\label{app:mechanics_overview}
This appendix provides a brief overview of the classical mechanics of non-conservative systems to motivate the definition of the global Lagrangian functional $\mathcal{L}(\bm{x}, \bm{z})$ presented in Eq.~\ref{eq:energy_xz_icml}. Our construction draws on the formalism developed by \cite{galley2013classical}, \cite{keldysh1964diagram} and \cite{bateman1931dissipative}, adapted here for the specific context of neural dynamics.

\subsection{Lagrangian Mechanics for Dissipative Systems}
\label{app:mechanics_prelims}

We follow the logic of Galley's framework, and refer the reader to the recent
review \cite{aykroyd2025hamiltonian} for a detailed and pedagogical
derivation. Our framework extends the Bateman--Galley formalism, in which an
auxiliary system absorbs the energy lost by a dissipative one so as to
guarantee total conservation, and adapts it to the relaxation dynamics
relevant to feedforward networks.

\paragraph{Conservative, dissipative, and overdamped regimes.}
In a conservative system, the dynamics follow
\begin{equation}
m\ddot{x} = -\frac{\partial U(x)}{\partial x},
\end{equation}
where $m$ is the mass, $U$ is a scalar potential, and the overdot denotes the
time derivative. A dissipative system carries an additional damping term,
\begin{equation}
m\ddot{x} + \gamma\dot{x} + \frac{\partial U(x)}{\partial x} = 0,
\end{equation}
whose dynamics is recovered from the modified Lagrange equation
\begin{equation}
\frac{d}{dt}\!\left(\frac{\partial L}{\partial \dot{x}}\right)
-\frac{\partial L}{\partial x}
+\frac{\partial R}{\partial \dot{x}}
=0,
\label{eq:lagr_with_rayleigh}
\end{equation}
where $R=\tfrac{1}{2}\dot{x}^{2}$ is the Rayleigh dissipation function for
linear damping (we set $\gamma=1$). In machine learning, and in
equilibrium-based theories such as ours, the natural setting is one of
first-order dynamics. Physics calls this the \emph{overdamped} regime (e.g.\
Brownian motion): the acceleration term is neglected and one recovers
\begin{equation}
\dot{x} = -\frac{1}{\gamma}\frac{\partial U(x)}{\partial x}.
\end{equation}
We adopt this regime throughout the main text purely for clarity of
exposition; the construction extends without difficulty to the full massive
case, recovering the same equilibria and stability properties, as briefly
shown in Appendix~\ref{app:mass}. In this regime,
\eqref{eq:lagr_with_rayleigh} simplifies to
\begin{equation}
-\frac{\partial L}{\partial x} + \frac{\partial R}{\partial \dot{x}} = 0
\quad\Longleftrightarrow\quad
\dot{x} = \frac{\partial L}{\partial x}.
\end{equation}

\paragraph{Doubling and the action principle.}
To accommodate non-reciprocal forces, the Bateman--Galley formalism doubles
the configuration space, replacing $x$ by a conjugate pair $(x,z)$. Because
we work in the overdamped regime, the doubled Lagrangian carries no kinetic
term, and the natural action takes the form
\begin{equation}
S=\int_{t_1}^{t_2}\mathcal{L}(x,\dot{x},z,\dot{z})\,dt
=\int_{t_1}^{t_2}\!\big[L(x,\dot{x})-L(z,\dot{z})\big]\,dt,
\end{equation}
in which the $z$-path is integrated backward in time relative to the
$x$-path; this is the origin of the sign asymmetry that will appear in the
equations of motion.

\paragraph{Antisymmetric coupling.}
Following \cite{aykroyd2025hamiltonian} and the Galley construction, we
introduce an antisymmetric coupling term $K(x,z)$ between the two paths,
\begin{equation}
S=\int_{t_1}^{t_2}\!\big[L(x,\dot{x})-L(z,\dot{z})+K(x,\dot{x},z,\dot{z})\big]\,dt,
\label{eq:doubled_action_K}
\end{equation}
which breaks the path symmetry and injects non-conservative effects into the
equations of motion.  Applying \eqref{eq:lagr_with_rayleigh} to \eqref{eq:doubled_action_K}
yields
\begin{equation}
\frac{d}{dt}\!\left(\frac{\partial \mathcal{L}}{\partial \dot{x}}\right)+\dot{x}=\frac{\partial \mathcal{L}}{\partial x},
\qquad
-\frac{d}{dt}\!\left(\frac{\partial \mathcal{L}}{\partial \dot{z}}\right) +\dot{z}=-\frac{\partial \mathcal{L}}{\partial z}.
\end{equation}The construction of the explicit form of $K$ and the generalized Lagrangian $\mathcal{L}$ for the
network flow $F$ is carried out in
Appendix~\ref{app:lagr_build}.

\subsection{From a Non-Reciprocal Flow to a Variational Principle}
\label{app:nonreciprocal_to_variational}

The forward dynamics
$\dot{\bm{a}} = F(\bm{a}) = \bm{\sigma}(\bm{W}\bm{a} + \bm{\beta}(\bm{x}_0)) - \bm{a}$
introduced in Sec.~\ref{subsec: forward_vf} is intrinsically
\emph{non-reciprocal}: information flows from layer $\ell$ to $\ell+1$ but
not in reverse, and the corresponding Jacobian $\nabla F$ is strictly
block-lower-triangular and therefore non-self-adjoint. The Helmholtz
conditions of the inverse problem of the calculus of variations state that
a vector field admits a scalar potential---i.e., can be written as
$F = -\nabla H$ for some $H$---if and only if its Jacobian is symmetric.
Consequently, no scalar potential exists on the original $n$-dimensional
state space whose gradient generates the feedforward flow, and the standard
``energy minimisation'' route to a variational principle is unavailable.

The Bateman--Galley formalism
circumvents this obstruction by doubling the configuration space. The
rotational (non-reciprocal) component of $F$, which has no scalar potential
in the original $n$-dimensional space, is absorbed into a bilinear coupling
between two conjugate fields $\bm{x},\bm{z}\in\mathbb{R}^{n}$ on the
doubled $2n$-dimensional space, where it admits a well-defined Lagrangian
description. The physical motion of the original system is recovered on
the diagonal submanifold $\bm{x}=\bm{z}$, while the off-diagonal direction
$\bm{x}-\bm{z}$ provides the additional degree of freedom needed to encode
the non-reciprocity variationally. We adapt this construction to the
network flow $F$ in Appendix~\ref{app:lagr_build} below.

\subsection{Constructing the Lagrangian}
\label{app:lagr_build}

We now make explicit how the global Lagrangian \eqref{eq:energy_xz_icml} is
obtained from the bare forward vector field $F$. The construction follows the
inverse-problem recipe presented in Section~V of \cite{aykroyd2025hamiltonian}
for arbitrary second-order ODEs, specialised here to the first-order setting
that is natural for feedforward relaxation dynamics.

\paragraph{Notation}
In this section we are briefly describing how to construct the Lagrangian in our specific case; we will map our framework (even if we initially obtained the formulation through a geometrical argument) onto the construction in section V of \cite{aykroyd2025hamiltonian} to give the reader more material and better clarity. In our formalism the conjugate pair
$(\bm{x},\bm{z})\in\mathbb{R}^{2n}$, which corresponds to the
$(\bm{q}_\uparrow,\bm{q}_\downarrow)$ parametrisation of
\cite{aykroyd2025hamiltonian}. The change of basis to
$\big(\tfrac{1}{2}(\bm{x}+\bm{z}),\,\bm{x}-\bm{z}\big)$ then corresponds to
their $(\bm{q}_+,\bm{q}_-)$ parametrisation, with $\tfrac{1}{2}(\bm{x}+\bm{z})$
playing the role of the physical (plus) variable and $\bm{x}-\bm{z}$ the role
of the virtual (minus) variable. The physical limit corresponds to
$\bm{x}=\bm{z}$.

\paragraph{Reconstruction from a target ODE.}
For a generic second-order target ODE
$\ddot{\bm{q}}=\bm{U}(\bm{q},\dot{\bm{q}},t)$,
\cite{aykroyd2025hamiltonian} show that the doubled-variable Lagrangian
\begin{equation}
\mathcal{L}
=
\bm{q}_-\!\cdot\!\big(\ddot{\bm{q}}_+ - \bm{U}(\bm{q}_+,\dot{\bm{q}}_+,t)\big)
\label{eq:lagr_trivial}
\end{equation}
trivially reproduces the equations of motion as its physical-limit
Euler--Lagrange equation. Integrating \eqref{eq:lagr_trivial} by parts removes
the explicit acceleration and yields the equivalent linear (Lie) form
\begin{equation}
\mathcal{L}(\bm{q}_\pm,\dot{\bm{q}}_\pm,t)
=
\dot{\bm{q}}_-\!\cdot\!\dot{\bm{q}}_+
+
\bm{q}_-\!\cdot\!\bm{U}(\bm{q}_+,\dot{\bm{q}}_+,t),
\label{eq:lagr_secondorder}
\end{equation}
which is Eq.~(121) of \cite{aykroyd2025hamiltonian}. The first term is the
canonical kinetic coupling associated with the standard momentum choice
$\bm{P}=\dot{\bm{q}}_+$ (see Eq.~(126) of \cite{aykroyd2025hamiltonian}); the
second encodes the force $\bm{U}$.

\paragraph{Specialisation to the network flow.}
The forward relaxation defined in Sec.~\ref{subsec: forward_vf},
\begin{equation}
\frac{d\bm{a}}{dt}
=
F(\bm{a})
=
\bm{\sigma}(\bm{W}\bm{a}+\bm{\beta}(\bm{x}_0))-\bm{a},
\end{equation}
is first-order in time. Adapting \eqref{eq:lagr_secondorder} to this regime
amounts to neglecting the acceleration (kinetic) term
$\dot{\bm{q}}_-\!\cdot\!\dot{\bm{q}}_+$ and replacing the second-order
acceleration field $\bm{U}$ by the first-order velocity field $F$. This is no
loss of generality: the analogous derivation retaining a mass term reaches the
same equilibria and stability properties, and is treated separately in
Appendix~\ref{app:mass}. Translating to the $(\bm{x},\bm{z})$ notation, the
remaining contribution is
\begin{equation}
\mathcal{L}(\bm{x},\bm{z})
=
(\bm{x}-\bm{z})^{\!\top}
F\!\left(\frac{\bm{x}+\bm{z}}{2}\right),
\label{eq:lagr_xz_clean}
\end{equation}
which is precisely the Lagrangian appearing in
\eqref{eq:energy_xz_icml}, before the symmetry-breaking cost is added. The
auxiliary variable $\bm{z}$ then plays exactly the role of the virtual copy in
the Galley framework: it absorbs the rotational (non-conservative) component
of $F$ off the physical submanifold $\bm{x}=\bm{z}$, while leaving the on-shell
motion unchanged. Adding the task loss
$C\!\big([\tfrac{1}{2}(\bm{x}+\bm{z})]_L,\bm{y}\big)$ to
\eqref{eq:lagr_xz_clean} as a symmetry-breaking perturbation yields the full
global Lagrangian \eqref{eq:energy_xz_icml} used throughout the paper.

\subsection{Symmetry Breaking as Credit Assignment}
\label{app:symmetry_breaking}
The variational principle established above successfully generates the forward dynamics on the diagonal manifold $\bm{x} = \bm{z}$. On this manifold, the system possesses a gauge symmetry: the forward state $\bm{z}$ is redundant, and the stress $(\bm{x}-\bm{z})$ is zero.

We propose to exploit this symmetry to perform credit assignment. Specifically, we introduce the task loss $C(\bm{m})$ as a symmetry-breaking potential acting on the output block. By adding this cost to the Lagrangian, we intentionally perturb the gauge symmetry of the system.

This perturbation exerts opposite forces on the pair: it drives $\bm{z}$ to descend the loss landscape while forcing $\bm{x}$ to ascend.

This "tug-of-war" induces a physical separation $\bm{s} = \bm{x} - \bm{z}$. The evolution of this stress variable naturally recovers the exact gradients. Thus, in our framework, backpropagation emerges not as a symbolic operation, but as the physical relaxation of the stress induced by explicitly breaking the symmetry of the forward pass.

\section{Extension to Second-Order Dynamics}
\label{app:mass}

In the main text we worked in the first-order regime, in which the
acceleration term of the Aykroyd--Galley Lagrangian is dropped. We show here
that this is purely an expository choice: the construction extends without
modification to the full second-order (massive) case, and recovers the same
equilibria and the same gradient signal.

\paragraph{Massive Lagrangian.}
Endowing the doubled state $(\bm{x},\bm{z})$ with a mass $m$ and reinstating
the kinetic coupling of Eq.~(121) in \cite{aykroyd2025hamiltonian} yields the
global Lagrangian
\begin{equation}
\mathcal{L}(\bm{x},\bm{z},\dot{\bm{x}},\dot{\bm{z}})
=
m\,(\dot{\bm{x}}-\dot{\bm{z}})^{\!\top}\!
\left(\frac{\dot{\bm{x}}+\dot{\bm{z}}}{2}\right)
+
(\bm{x}-\bm{z})^{\!\top}
F\!\left(\frac{\bm{x}+\bm{z}}{2}\right)
+
C\!\left(\!\left[\frac{\bm{x}+\bm{z}}{2}\right]_L\!,\bm{y}\right),
\label{eq:lagr_mass}
\end{equation}
which differs from \eqref{eq:energy_xz_icml} only by the leading kinetic term.
The remaining ingredients --- the bilinear lifting of the network flow $F$
and the symmetry-breaking cost --- are unchanged.

\paragraph{Equations of motion.}
Extremising the action of \eqref{eq:lagr_mass} together with the Rayleigh
dissipation function $R=\tfrac{1}{2}(\dot{\bm{x}}^{2}+\dot{\bm{z}}^{2})$ gives
\begin{equation}
\begin{aligned}
\frac{d}{dt}\!\left(\frac{\partial \mathcal{L}}{\partial \dot{\bm{x}}}\right)
-\frac{\partial \mathcal{L}}{\partial \bm{x}}
&=
-\frac{\partial R}{\partial \dot{\bm{x}}}
\;\Longrightarrow\;
m\ddot{\bm{x}}+\dot{\bm{x}}
=
\frac{\partial \mathcal{L}}{\partial \bm{x}},
\\[4pt]
-\!\left(\frac{d}{dt}\!\left(\frac{\partial \mathcal{L}}{\partial \dot{\bm{z}}}\right)
-\frac{\partial \mathcal{L}}{\partial \bm{z}}\right)
&=
-\frac{\partial R}{\partial \dot{\bm{z}}}
\;\Longrightarrow\;
m\ddot{\bm{z}}+\dot{\bm{z}}
=
-\frac{\partial \mathcal{L}}{\partial \bm{z}}.
\end{aligned}
\label{eq:eom_mass}
\end{equation}
In the limit $m\to 0$ these collapse exactly to the first-order
flow used in the main text. For $m>0$, \eqref{eq:eom_mass} is a damped
second-order system whose first-order limit recovers our dynamics verbatim.

\paragraph{Equilibria and gradient signal.}
At any equilibrium $(\bar{\bm{x}},\bar{\bm{z}})$ of \eqref{eq:eom_mass} we
have $\ddot{\bm{x}}=\dot{\bm{x}}=\bm{0}$ and
$\ddot{\bm{z}}=\dot{\bm{z}}=\bm{0}$, so the kinetic and damping terms drop
out and the equilibrium conditions reduce to
\begin{equation}
\frac{\partial \mathcal{L}}{\partial \bm{x}}\bigg|_{\bar{\bm{x}},\bar{\bm{z}}}
=\bm{0},
\qquad
\frac{\partial \mathcal{L}}{\partial \bm{z}}\bigg|_{\bar{\bm{x}},\bar{\bm{z}}}
=\bm{0},
\end{equation}
which are exactly the equilibrium conditions analysed in
Sec.~\ref{sec:m-s}. Consequently the equilibrium configuration
$(\bar{\bm{m}},\bar{\bm{s}})$ is unchanged, and the gradient extracted from
the stress variable $\bar{\bm{s}}=\bar{\bm{x}}-\bar{\bm{z}}$ remains the
exact backpropagation gradient \eqref{eq:backprop_s}.

\paragraph{Stability.}
The Jacobian of the linearised second-order system inherits the spectral
structure of the first-order analysis of Appendix~\ref{subsec:stability},
augmented by the standard damped-oscillator eigenvalue pattern: each
eigenvalue $\lambda=-1$ of the first-order Jacobian becomes a pair of roots
of the characteristic polynomial $m\mu^{2}+\mu+1=0$, whose real parts are
strictly negative for any $m>0$. The equilibrium therefore remains globally
exponentially stable, with a relaxation rate that interpolates smoothly
between the overdamped and underdamped regimes as $m$ is varied.

In summary, the massive extension preserves both the forward-pass equilibria
and the backpropagation gradients, and merely modifies the transient
relaxation profile. The first-order treatment of the main text is therefore
a notational simplification rather than a structural restriction.

\section{Algorithm}
\label{app:dbp_alg}
The pseudocode for the integration of the Dyadic Backpropagation scheme is given below.

\begin{algorithm}[H]
\caption{Dyadic Backpropagation (DBP)}
\label{alg:xz_schematic}
\small
\begin{algorithmic}[1]

\STATE \textbf{Input:} $\bm{x}_0,\bm{y}$, parameters $\{\bm{W}_\ell,\bm{b}_\ell\}_{\ell=1}^L$, step size $\eta>0$, tolerance $\varepsilon>0$
\STATE Initialize $\bm{x}^0 = \bm{z}^0$

\FOR{$k=0,1,2,\dots$}
    \STATE $\bm{m}^k \leftarrow \tfrac{1}{2}(\bm{x}^k + \bm{z}^k)$
    \STATE $\bm{D}^k \leftarrow \operatorname{diag}\!\big(\bm{\sigma}'(\bm{W}\bm{m}^k + \bm{\beta}(\bm{x}_0))\big)$
    \STATE Compute loss gradient $\bm{g}^k \leftarrow \nabla_{\bm{m}} C(\bm{m}^k,\bm{y})$, embedded on the output block

    \STATE Compute velocity fields from Eqs.~\eqref{eq:xz_saddle_dynamics}
    \[
    \begin{aligned}
    \dot{\bm{x}}^k &= \bm{\sigma}(\bm{W}\bm{m}^k + \bm{\beta}(\bm{x}_0)) - \bm{x}^k \\
                   &\quad + \tfrac{1}{2}\bm{W}^\top \bm{D}^k (\bm{x}^k - \bm{z}^k) + \tfrac{1}{2}\bm{g}^k, \\
    \dot{\bm{z}}^k &= \bm{\sigma}(\bm{W}\bm{m}^k + \bm{\beta}(\bm{x}_0)) - \bm{z}^k \\
                   &\quad - \tfrac{1}{2}\bm{W}^\top \bm{D}^k (\bm{x}^k - \bm{z}^k) - \tfrac{1}{2}\bm{g}^k
    \end{aligned}
    \]

    \STATE Update states by Euler integration:
    \[
    \bm{x}^{k+1} \leftarrow \bm{x}^k + \eta\,\dot{\bm{x}}^k,
    \qquad
    \bm{z}^{k+1} \leftarrow \bm{z}^k + \eta\,\dot{\bm{z}}^k
    \]

    \IF{$\|\bm{x}^{k+1}-\bm{x}^k\|_2 + \|\bm{z}^{k+1}-\bm{z}^k\|_2 < \varepsilon$}
        \STATE \textbf{break}
    \ENDIF
\ENDFOR

\STATE \textbf{Equilibrium variables:}
\[
\bar{\bm{x}} = \bm{x}^k,\quad
\bar{\bm{z}} = \bm{z}^k,\quad
\bar{\bm{m}} = \tfrac{1}{2}(\bar{\bm{x}}+\bar{\bm{z}}),\quad
\bar{\bm{s}} = \bar{\bm{x}}-\bar{\bm{z}}.
\]
\STATE \textbf{Gradient (equivalent to standard BP):}
\[
\frac{dC}{d\bm{W}}
=
\bm{D}(\bar{\bm{m}})\,\bar{\bm{s}}\,\bar{\bm{m}}^{\!\top}
=
\bm{D}\!\left(\tfrac{\bar{\bm{x}}+\bar{\bm{z}}}{2}\right)
(\bar{\bm{x}}-\bar{\bm{z}})
\left(\tfrac{\bar{\bm{x}}+\bar{\bm{z}}}{2}\right)^{\!\top}.
\]

\end{algorithmic}
\end{algorithm}

% \section{Equilibrium Lagrangian equals task loss.}
% \label{app:eq_en}
% It is instructive to compute the value of the Lagrangian $\mathcal{L}(\bm{m}, \bm{s})$ at the equilibrium point $(\bar{\bm{m}}, \bar{\bm{s}})$. In the forward phase, the primal variable $\bm{m}$ relaxes to the exact fixed point of the network equations:
% \begin{equation}
% \bar{\bm{m}} = \bm{\sigma}(\bm{W}\bar{\bm{m}} + \bm{\beta}(\bm{x}_0)).
% \end{equation}
% Substituting this into $\mathcal{L}(\bm{m}, \bm{s})$ yields:
% \begin{align}
% \mathcal{L}(\bar{\bm{m}}, \bar{\bm{s}})
% &=
% \bar{\bm{s}}^{\!\top}
% \underbrace{\Big[
% \bm{\sigma}\!\big(\bm{W}\bar{\bm{m}} + \bm{\beta}(\bm{x}_0)\big)
% - \bar{\bm{m}}
% \Big]}_{\bm{0}}
% + C(\bar{\bm{m}}_L,\bm{y}) \\
% &= C(\bar{\bm{m}}_L,\bm{y}).
% \end{align}
% This result confirms that the saddle-point equilibrium is physically meaningful: the ``stored work'' in the constraints ($\bm{s}^\top [\dots]$) vanishes when the forward pass is valid, leaving only the potential energy of the output error. The system thus relaxes to a state where the energy is exactly the task loss, constrained to the manifold of valid network activations.

\section{Proof of Forward Layer Freezing}
\label{app:forward_freezing_proof}
We provide the deferred proof of the Forward Layer Freezing lemma stated in Section~\ref{sec:discrete}.

\begin{proof}
The proof proceeds by induction on $\ell$.
For $\ell=1$, the update reads
\[
\bm{m}_1^{k+1} = \sigma_1(\bm{W}_1 \bm{x}_0 + \bm{b}_1),
\]
which is independent of $k$; hence $\bm{m}_1^{k}=\bm{a}_1$ for all $k\ge 1$.
Now assume the claim holds for layers $1,\dots,\ell-1$.
The $\ell$-th block update uses only $\bm{m}_{\ell-1}^k$, which equals $\bm{a}_{\ell-1}$ for all $k\ge \ell-1$.
Thus for $k\ge \ell-1$,
\[
\bm{m}_\ell^{k+1} = \sigma_\ell(\bm{W}_\ell \bm{m}_{\ell-1}^k + \bm{b}_\ell)
= \sigma_\ell(\bm{W}_\ell \bm{a}_{\ell-1} + \bm{b}_\ell)
= \bm{a}_\ell,
\]
and the claim follows.
\end{proof}

\section{Algorithmic Form of the Discrete Dynamics}
\label{app:autonomous_alg}
We give here the explicit pseudocode for the discrete-time projection of the $(\bm{m},\bm{s})$ dynamics referenced in Section~\ref{sec:discrete}.

\begin{algorithm}[H]
\caption{Dynamical Backpropagation}
\label{alg:autonomous_bp}
\small
\begin{algorithmic}[1]
\STATE \textbf{Input:} $\bm{x}_0, \bm{y}, \{\bm{W}_\ell,\bm{b}_\ell\}_{\ell=1}^L$
\STATE \textbf{Initialize:} $\bm{m}^0$, $\bm{s}^0$
\STATE \emph{// Simultaneous evolution loop}
\FOR{$k=0$ to $2L-1$}
    \STATE $\bm{m}^{k+1} \leftarrow \bm{\sigma}(\bm{W}\bm{m}^k + \bm{\beta}(\bm{x}_0))$
    \STATE $\bm{D}^k \leftarrow \operatorname{diag}\!\big(\bm{\sigma}'(\bm{W}\bm{m}^k + \bm{\beta}(\bm{x}_0))\big)$
    \STATE $\bm{s}^{k+1} \leftarrow \bm{W}^\top \bm{D}^k \bm{s}^k + \nabla_{\bm{m}_L}C(\bm{m}_L^k, \bm{y})$
\ENDFOR
\STATE \textbf{Gradient:} $\displaystyle \frac{\partial C}{\partial \bm{W}} = \bm{D}(\bm{m}^{2L})\,\bm{s}^{2L}\,(\bm{m}^{2L})^{\top}$
\end{algorithmic}
\end{algorithm}

\section{Properties of the Equilibrium}
In this section we analyze some claims of the main text related to the properties of the equilibrium configuration of our relaxation dynamics.
\subsection{Explicit Equivalence to Backpropagation}
\label{subsec:bp_eq}

We derive the explicit form of the equilibrium state $\bar{\bm{s}}$ by directly inverting the linear system governing the backward phase. Recall the equilibrium condition for the stress variable from Eq.~\eqref{eq:s_equilibrium_linear_full}:
\begin{equation}
(\bm{I} - \bm{W}^\top \bar{\bm{D}})\,\bar{\bm{s}} = \nabla_{\bm{m}} C,
\end{equation}
where $\nabla_{\bm{m}} C$ denotes the gradient of the loss with respect to all activations. Due to the network structure, this vector is sparse, containing non-zero entries only in the block corresponding to the output layer $L$.

Let $\bm{M} = \bm{W}^\top \bar{\bm{D}}$. Since $\bm{W}$ is strictly lower block-triangular (representing feedforward connectivity), its transpose $\bm{W}^\top$ is strictly upper block-triangular. The multiplication by the diagonal matrix $\bar{\bm{D}}$ preserves this structure.

Consequently, $\bm{M}$ acts as a ``backward shift'' operator, moving signals from layer $\ell$ to $\ell-1$. Crucially, this implies that $\bm{M}$ is nilpotent with index $L$:
\begin{equation}
\bm{M}^L = \bm{0}.
\end{equation}
Physically, this reflects the fact that error signals cannot propagate backward through more than $L$ layers.

Because $\bm{M}$ is nilpotent, the matrix $(\bm{I} - \bm{M})$ is invertible, and its inverse is given exactly by the finite Neumann series:
\begin{equation}
(\bm{I} - \bm{M})^{-1} = \sum_{k=0}^{L-1} \bm{M}^k = \bm{I} + \bm{M} + \bm{M}^2 + \dots + \bm{M}^{L-1}.
\end{equation}
Substituting this into the equilibrium equation yields the closed-form solution:
\begin{equation}
\bar{\bm{s}} = \left( \sum_{k=0}^{L-1} (\bm{W}^\top \bar{\bm{D}})^k \right) \nabla_{\bm{m}} C.
\end{equation}

\paragraph{Interpretation.}
This series expansion recovers the standard backpropagation of errors. The term $k=0$ captures the direct loss gradient at the output. The term $k=1$ represents the loss gradient backpropagated by one layer ($\bm{W}^\top \bar{\bm{D}} \nabla_{\bm{m}} C$), and the general term $k$ represents the contribution of the loss backpropagated through $k$ layers. Since $\nabla_{\bm{m}} C$ is non-zero only at layer $L$, the vector $\bar{\bm{s}}$ correctly aggregates the total derivative of the loss with respect to the activations at every layer $\ell$.

\subsection{Exponential Stability of the Equilibrium}
\label{subsec:stability}

We analyze the local stability of the coupled dynamical system by inspecting the spectrum of its Jacobian matrix at the equilibrium point $(\bar{\bm{m}}, \bar{\bm{s}})$.

Let $\bm{u} = [\bm{m}^\top, \bm{s}^\top]^\top$ denote the full state vector. The total Jacobian $\mathcal{J} \in \mathbb{R}^{2n \times 2n}$ is given by the block matrix:
\begin{equation}
\mathcal{J} =
\begin{bmatrix}
\frac{\partial \dot{\bm{m}}}{\partial \bm{m}} & \frac{\partial \dot{\bm{m}}}{\partial \bm{s}} \\[6pt]
\frac{\partial \dot{\bm{s}}}{\partial \bm{m}} & \frac{\partial \dot{\bm{s}}}{\partial \bm{s}}
\end{bmatrix}.
\end{equation}

From the system definition in Eq.~\eqref{eq:ms_dynamics}, we observe that the evolution of the primal variable $\bm{m}$ is independent of the dual variable $\bm{s}$. Consequently, the upper-right block vanishes:
\[
\frac{\partial \dot{\bm{m}}}{\partial \bm{s}} = \bm{0}.
\]
This block-triangular structure implies that the spectrum of $\mathcal{J}$ is the union of the spectra of its diagonal blocks. We analyze them individually:

\textbf{1. Primal Block ($\bm{m}$).} The linearization of the forward dynamics is
\[
\bm{J}_{\bm{m}\bm{m}} = \frac{\partial \dot{\bm{m}}}{\partial \bm{m}} = \operatorname{diag}(\bm{\sigma}'(\bm{m}))\bm{W} - \bm{I}.
\]
Since $\bm{W}$ is strictly lower block-triangular (nilpotent), the product $\operatorname{diag}(\dots)\bm{W}$ retains this strictly lower triangular structure and has zeros on its main diagonal. Therefore, the eigenvalues are determined solely by the identity term:
\[
\operatorname{spec}(\bm{J}_{\bm{m}\bm{m}}) = \{-1\}.
\]

\textbf{2. Dual Block ($\bm{s}$).} The linearization of the backward dynamics with respect to $\bm{s}$ is
\[
\bm{J}_{\bm{s}\bm{s}} = \frac{\partial \dot{\bm{s}}}{\partial \bm{s}} = \bm{W}^\top \operatorname{diag}(\bm{\sigma}'(\bm{m})) - \bm{I}.
\]
Since $\bm{W}$ is strictly lower triangular, its transpose $\bm{W}^\top$ is strictly upper triangular. The product $\bm{W}^\top \operatorname{diag}(\dots)$ is thus strictly upper triangular with a zero diagonal. Similarly, the eigenvalues are determined by the identity term:
\[
\operatorname{spec}(\bm{J}_{\bm{s}\bm{s}}) = \{-1\}.
\]

\textbf{Conclusion.} The Jacobian $\mathcal{J}$ has a single eigenvalue $\lambda = -1$ with algebraic multiplicity $2n$. Since $\operatorname{Re}(\lambda) < 0$, the equilibrium $(\bar{\bm{m}}, \bar{\bm{s}})$ is \textbf{globally exponentially stable}. The system relaxes with a natural time constant $\tau = 1$, ensuring rapid convergence to the backpropagation solution regardless of initialization. Finally, the $(\bm{x},\bm{z})$ system also possesses globally exponentially stable equilibria, given that these dynamics are obtained through a linear change of variable.
\section{Alternative Formulation: Split-Dynamics}
\label{app:split_dynamics}

To facilitate implementation on hardware, we present an alternative dynamical formulation where the forward drive and Jacobian operators are evaluated individually at $\bm{x}$ and $\bm{z}$ rather than at the shared midpoint, neglecting all non-linear orders of the stress term $(x-z)$ in Eq.(\ref{eq:xz_saddle_dynamics}). One can easily see that only the linear term of the stress term $(x-z)$ will contribute to the gradient.

\subsection{Split-Dynamics Equations}

Using the same notation of the main text, neglecting all non-linear orders of $(x-z)$ in Eq.(\ref{eq:xz_saddle_dynamics}), we get

\begin{align}
\frac{d\bm{x}}{dt}
&=
\frac{1}{2}\Big[ F(\bm{x}) + F(\bm{z}) \Big]
+ \frac{1}{2} \left(\frac{\partial F}{\partial \bm{x}}\right)^T (\bm{x}-\bm{z})
+ \frac{1}{2} \nabla_{\bm{x}} C,
\label{eq:split_dx}
\\
\frac{d\bm{z}}{dt}
&=
\frac{1}{2}\Big[ F(\bm{x}) + F(\bm{z}) \Big]
- \frac{1}{2} \left(\frac{\partial F}{\partial \bm{z}}\right)^T (\bm{x}-\bm{z})
- \frac{1}{2} \nabla_{\bm{z}} C.
\label{eq:split_dz}
\end{align}

Here, $\nabla_{\bm{u}} C$ denotes the gradient of the loss injected at the output block of the state $\bm{u}$.

This formulation avoids the computation of the midpoint state $\bm{m}$ for derivative evaluations.

\subsection{Split-Jacobian Relaxation Algorithm}

The corresponding discrete-time implementation of the Split-Dynamics is provided in Algorithm \ref{alg:split_dynamics_schematic}. This reformulation of the algorithm, tested on the same tasks, matches the performance of algorithm \ref{alg:xz_schematic} as predicted from the theory.

\begin{algorithm}[t]
\caption{Decoupled Split-Jacobian $\bm{x}$--$\bm{z}$ Relaxation}
\label{alg:split_dynamics_schematic}
\small
\begin{algorithmic}[1]

\STATE \textbf{Input:} $\bm{x}_0,\bm{y}$, parameters $\{\bm{W}_\ell,\bm{b}_\ell\}_{\ell=1}^L$, step size $\eta>0$, tolerance $\varepsilon>0$
\STATE Initialize $\bm{x}^0 = \bm{0}$, $\bm{z}^0 = \bm{0}$

\FOR{$k=0,1,2,\dots$}
    \STATE \textbf{Define local variables:}
    \[
    \begin{aligned}
    \bm{s}^k &= \bm{x}^k - \bm{z}^k, \\
    \bm{\Sigma}^k &= \tfrac{1}{2}\big[\bm{\sigma}(\bm{W}\bm{x}^k + \bm{\beta}) + \bm{\sigma}(\bm{W}\bm{z}^k + \bm{\beta})\big], \\
    \bm{D}_{\bm{x}}^k &= \operatorname{diag}\!\big(\bm{\sigma}'(\bm{W}\bm{x}^k + \bm{\beta})\big), \quad \bm{g}_{\bm{x}}^k = \nabla_{\bm{x}} C(\bm{x}^k, \bm{y}), \\
    \bm{D}_{\bm{z}}^k &= \operatorname{diag}\!\big(\bm{\sigma}'(\bm{W}\bm{z}^k + \bm{\beta})\big), \quad \bm{g}_{\bm{z}}^k = \nabla_{\bm{z}} C(\bm{z}^k, \bm{y}).
    \end{aligned}
    \]

    \STATE Compute velocity fields (fully decoupled dynamics):
    \[
    \begin{aligned}
    \dot{\bm{x}}^k &= \bm{\Sigma}^k - \bm{x}^k
    + \tfrac{1}{2}\bm{W}^\top \bm{D}_{\bm{x}}^k \bm{s}^k
    + \tfrac{1}{2}\bm{g}_{\bm{x}}^k,
    \\
    \dot{\bm{z}}^k &= \bm{\Sigma}^k - \bm{z}^k
    - \tfrac{1}{2}\bm{W}^\top \bm{D}_{\bm{z}}^k \bm{s}^k
    - \tfrac{1}{2}\bm{g}_{\bm{z}}^k
    \end{aligned}
    \]

    \STATE Update states:
    \[
    \bm{x}^{k+1} \leftarrow \bm{x}^k + \eta\,\dot{\bm{x}}^k,
    \qquad
    \bm{z}^{k+1} \leftarrow \bm{z}^k + \eta\,\dot{\bm{z}}^k
    \]

    \IF{$\|\bm{x}^{k+1}-\bm{x}^k\|_2 + \|\bm{z}^{k+1}-\bm{z}^k\|_2 < \varepsilon$}
        \STATE \textbf{break}
    \ENDIF
\ENDFOR

\STATE \textbf{Equilibrium variables:}
\[
\bar{\bm{m}} = \tfrac{1}{2}(\bm{x}^k+\bm{z}^k),\quad
\bar{\bm{s}} = \bm{x}^k-\bm{z}^k.
\]

\STATE \textbf{Gradient computation:}
\[
\frac{dC}{d\bm{W}}
=
\bm{D}(\bar{\bm{m}})\,\bar{\bm{s}}\,\bar{\bm{m}}^{\!\top}.
\]

\end{algorithmic}
\end{algorithm}
\section{Additional Experimental Results}
\label{sec:appendix_experiments}

In this section, we provide additional experimental results corroborating the gradient fidelity of the proposed Algorithm \ref{alg:xz_schematic}. We compare the relaxation-derived gradients against reference gradients computed via standard backpropagation (BP) on the CIFAR-10 dataset.

\subsection{Training Stability and Loss Trajectory}
To validate that the relaxation dynamics do not introduce optimization artifacts, we tracked the training loss across different Euler step sizes $\eta$. As illustrated in Figure \ref{fig:trainingloss}, the learning trajectories for $\eta \in \{0.25, 0.5, 0.75, 1.0\}$ indistinguishably overlap with the ideal case $\eta=1$.
This further confirms that the our formulation effectively solves the credit assignment problem without degrading optimization stability.

\begin{figure}[h]
  \centering
  \includegraphics[width=0.5\textwidth]{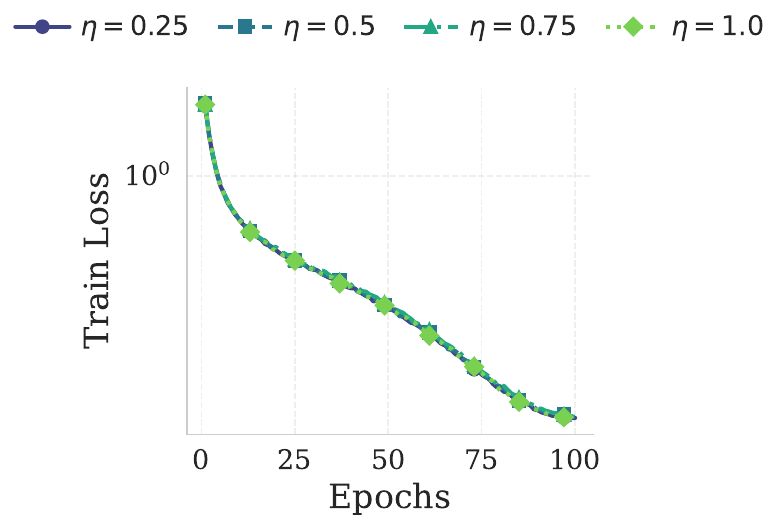}
  \caption{\textbf{Training Loss Consistency.} The decay of training loss for the relaxation algorithm matches exactly across all step sizes. Axes indicate Epochs (0--100) versus Loss (log scale).}
  \label{fig:trainingloss}
\end{figure}

\subsection{Gradient Norm Consistency}
\label{app:norm}
A critical requirement for replacing symbolic differentiation with physical relaxation is the preservation of gradient magnitude. We measured the norm ratio $\|\nabla_{\text{relax}}\| / \|\nabla_{\text{BP}}\|$ throughout training.

Figure \ref{fig:norm_ratio} demonstrates that the proposed method maintains near-perfect scale consistency. The deviation from unity is negligible, with fluctuations bounded within a minute range of $0.0001$ to $-0.0003$. This confirms that the stress variable $\bm{s} = \bm{x} - \bm{z}$ correctly accumulates the total sensitivity of the loss with respect to activations.

\begin{figure}[h]
  \centering
  \includegraphics[width=0.5\textwidth]{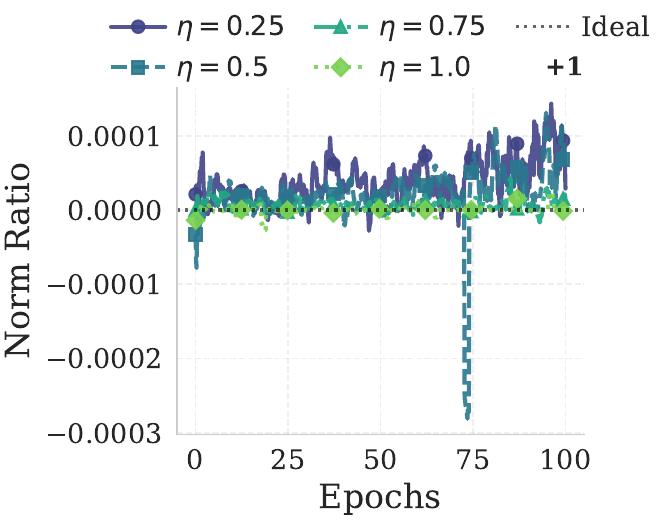}
  \caption{\textbf{Gradient Norm Fidelity.} The ratio between the norms of relaxation gradients and backpropagation gradients remains centered at 1.0, with deviations smaller than $10^{-4}$. Legend indicates step sizes $\eta=0.25$ through $\eta=1.0$.}
  \label{fig:norm_ratio}
\end{figure}

\subsection{Directional Alignment}
\label{app:dir_al}
We can also observe that the directional accuracy of the gradient update is paramount. We present here a global quantification of this using the global cosine misalignment metric, defined as $1 - \cos(\theta)$, where $\theta$ is the angle between $\nabla_{\text{relax}}$ and $\nabla_{\text{BP}}$ as measured in the main text.

Figure \ref{fig:cosine_mis} reveals that the misalignment is practically zero. The metric consistently falls below $10^{-5}$ oscillating around $10^{-7}$, confirming to be almost indistinguishable from standard backpropagation (the accuracy used in the code is IEEE 754 Float32single point precision corresponding to a decimal precision $\approx 10^{-7}$).

\begin{figure}[h]
  \centering
  \includegraphics[width=0.5\textwidth]{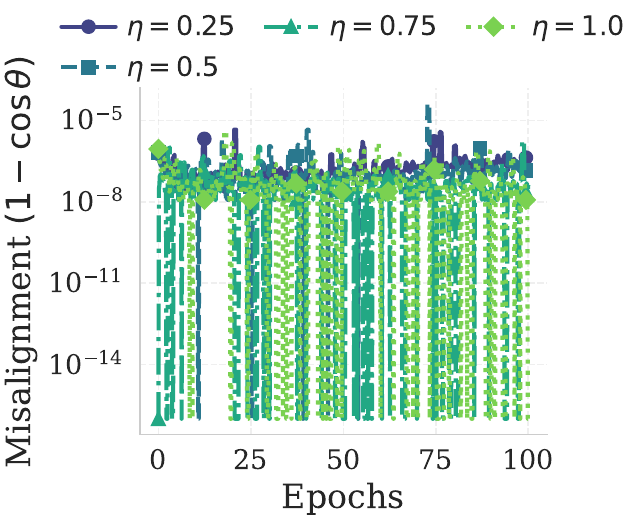}
  \caption{\textbf{Directional Alignment.} The cosine misalignment ($1-\cos \theta$) between the relaxation algorithm and standard BP.}
  \label{fig:cosine_mis}
\end{figure}

\subsection{Signal-to-Noise Ratio (SNR)}
Finally, we assessed the quality of the gradient estimator by computing the Signal-to-Noise Ratio (SNR), defined as $\|\nabla_{\text{BP}}\|^2 / \|\nabla_{\text{relax}} - \nabla_{\text{BP}}\|^2$.

As shown in Figure \ref{fig:snr}, the SNR remains exceptionally high throughout the training process, consistently exceeding $10^6$ (60 dB) and peaking near $10^9$. We observe a non-linear relationship where larger step sizes (e.g., $\eta=1.0$) yield much higher SNR.

\begin{figure}[h]
  \centering
  \includegraphics[width=0.5\textwidth]{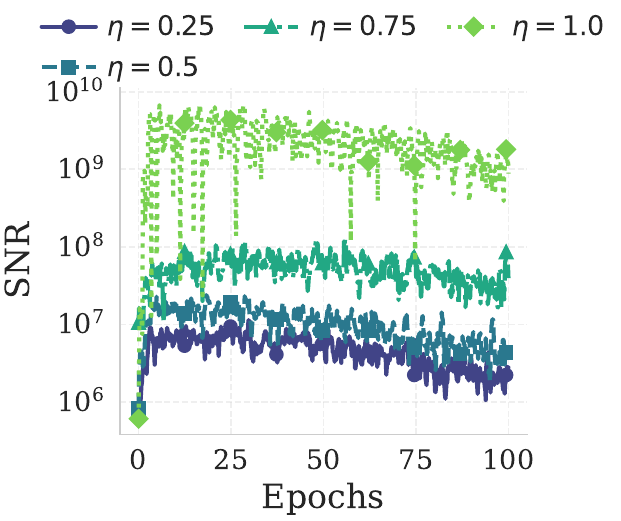}
  \caption{\textbf{Gradient Signal-to-Noise Ratio.} The SNR consistently exceeds $10^6$, confirming that the relaxation error is negligible compared to the gradient signal.}
  \label{fig:snr}
\end{figure}

\subsection{Layer-wise Misalignment and Time Step}
\label{app:layer_cos}
In our experimental analysis, we observed that while the overall gradient accuracy remains exceptionally high, a subtle degradation in alignment emerges as signals propagate backward to earlier layers (e.g., Layer 1) over the course of training. This phenomenon is visualized in Figure~\ref{fig:misalignment_eta025}, which displays the log-misalignment for a step size of $\eta=0.25$. This slight drift is attributable to the propagation of numerical errors inherent to the finite-time step approximation of the continuous dynamics.

Conversely, using the theoretically optimal step size $\eta=1.0$ (Figure~\ref{fig:misalignment_eta1}) recovers perfect alignment across all layers. We stress that digital implementation is not our goal; these experiments serve simply to demonstrate consistency and experimentally validate the theory.

\begin{figure}[h]
  \centering
  \includegraphics[width=0.5\textwidth]{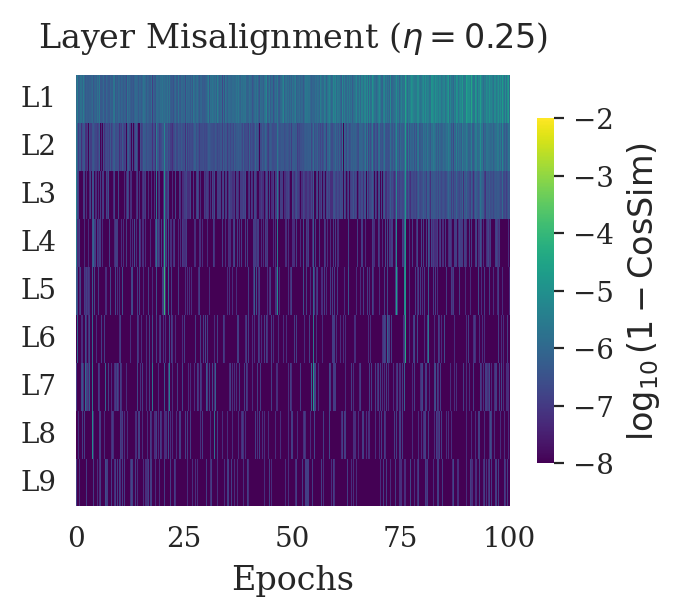}
  \caption{\textbf{Log-misalignment heatmap ($\eta=0.25$)}. The plot displays $\log_{10}(1 - \text{CosSim})$ for each layer. A minor accumulation of numerical error is visible in earlier layers, consistent with finite-step approximations.}
  \label{fig:misalignment_eta025}
\end{figure}

\begin{figure}[h]
  \centering
  \includegraphics[width=0.5\textwidth]{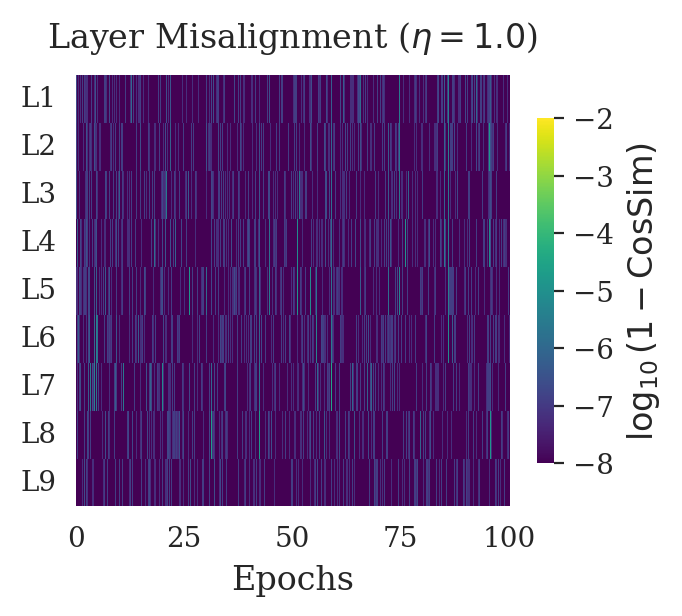}
  \caption{\textbf{Log-misalignment heatmap ($\eta=1.0$)}. The plot displays $\log_{10}(1 - \text{CosSim})$ for each layer. The alignment is effectively perfect (bounded only by machine precision), confirming that the unit-step discretization recovers the exact gradient.}
  \label{fig:misalignment_eta1}
\end{figure}

% NeurIPS requires the paper checklist. It is appended after the appendix and
% does not count toward the page limit.
% Omitted from the arXiv preprint version.
% \newpage
% \input{checklist.tex}

\end{document}